\definecolor{linkcolor}{RGB}{74, 102, 146}
\definecolor{best}{HTML}{e8edf2}
\newcommand{\cellhi}{\cellcolor{best}}
\newcommand{\norm}[1]{\left\Vert#1\right\Vert}
\newcommand{\abs}[1]{\left\vert#1\right\vert}
\newcommand{\set}[1]{\left\{#1\right\}}
\newcommand{\parr}[1]{\left (#1\right )}
\newcommand{\brac}[1]{\left [#1\right ]}
\newcommand{\Real}{\mathbb R}
\newcommand{\Nat}{\mathbb N}
\newcommand{\eps}{\varepsilon}
\newcommand{\too}{\rightarrow}
\newcommand{\defe}
{\triangleq}
\newcommand{\divv}{\mathrm{div}} 
\def\eqref#1{equation~\ref{#1}}
\def\1{\bm{1}}
\def\eps{{\epsilon}}
\DeclareMathAlphabet{\mathsfit}{\encodingdefault}{\sfdefault}{m}{sl}
\SetMathAlphabet{\mathsfit}{bold}{\encodingdefault}{\sfdefault}{bx}{n}
\def\gS{{\mathcal{S}}}
\def\gT{{\mathcal{T}}}
\renewcommand{\P}{\mathbb{P}}
\newcommand{\E}{\mathbb{E}}
\newcommand{\R}{\mathbb{R}}
\DeclareMathOperator*{\argmax}{arg\,max}
\newtheorem{theorem}{Theorem}[section]
\newtheorem{proposition}[theorem]{Proposition}
\newcommand{\dist}{\textrm{d}} 
\renewcommand{\S}{\mathcal{S}}
\newcommand*{\eg}{{\it e.g.}\@\xspace}
\newcommand*{\ie}{{\it i.e.}\@\xspace}
\newcommand{\dummy}{\mathbbm{m}} 
\definecolor{mygray}{gray}{0.95}
\newcommand{\graybox}[1]{%
\vspace{-1.5em} 
\begin{center}			
\colorbox{mygray} {		
\begin{minipage}{0.98\linewidth} 	
\centering \vspace{-0.8em}
{#1}    
\end{minipage}}			
\end{center} 
\vspace{-0.7em}
}
\newenvironment{talign*}
 {\csname align*\endcsname}
 {\endalign}
\newenvironment{talign}
 {\csname align\endcsname}
 {\endalign}
\newcommand{\rev}[1]{{\color{black}{#1}}}
\title{Flow Matching with General Discrete Paths: A Kinetic-Optimal Perspective}
\author{Neta Shaul$^{1,\dag}$,\;\, 
Itai Gat$^2$, \;\, 
Marton Havasi$^2$, \;\,
Daniel Severo$^2$,\;\,
Anuroop Sriram$^2$,\;\,\\
\textbf{Peter Holderrieth$^{3,\dag}$,\;\,
Brian Karrer$^2$, \;\,
Yaron Lipman$^2$, \;\,
Ricky T. Q. Chen$^{2}$}\\
$^1$Weizmann Institute of Science, 
$^2$Meta FAIR, $^3$MIT CSAIL\\
$^{\dag}$Work done during internship at Meta FAIR
}
\begin{document}

\maketitle

\begin{abstract}
The design space of discrete-space diffusion or flow generative models are significantly less well-understood than their continuous-space counterparts, with many works focusing only on a simple masked construction.
In this work, we aim to take a holistic approach to the construction of discrete generative models based on continuous-time Markov chains, and for the first time, allow the use of arbitrary discrete probability paths, or colloquially, corruption processes. 
Through the lens of optimizing the symmetric kinetic energy, we propose velocity formulas that can be applied to any given probability path, completely decoupling the probability and velocity, and giving the user the freedom to specify any desirable probability path based on expert knowledge specific to the data domain. 
Furthermore, we find that a special construction of mixture probability paths optimizes the symmetric kinetic energy for the discrete case.
We empirically validate the usefulness of this new design space across multiple modalities: text generation, inorganic material generation, and image generation. We find that we can outperform the mask construction even in text with kinetic-optimal mixture paths, while we can make use of domain-specific constructions of the probability path over the visual domain.
\end{abstract}

\section{Introduction}

Generative models over discrete spaces have not seen as much progress on the methodology side compared to continuous-space counterparts. 
For the most part, applications such as large language modeling rely solely on autoregressive models 
\citep{radford2019language,bommasani2021opportunities}. The simplicity of autoregressive modeling has also motivated people to use them for multimodal generation, where other modalities, such as images and videos, are tokenized and modeled within an autoregressive framework \citep{van2016conditional,team2024chameleon,sun2024autoregressive}. 
While obtaining reasonable results, they have not yet reached the performance of continuous-space generative models such as denoising diffusion \citep{ho2020denoising,song2020score} and Flow Matching models \citep{lipman2022flow,albergo2023stochastic} for the visual-audio domains \citep{rombach2022high,dai2023emu,esser2024scaling,zhou2024transfusion}, where it is believed that the ability to perform iterative refinement brings significant gains \citep{saharia2022image,zhang2024planner}.

A promising framework that brings iterative refinement to the discrete case is to consider the use of Markov chains within a dynamical generative framework. Many discrete-space generative flow and diffusion models have seen success in the generation of text \citep{austin2021structured,lou2023discrete,shi2024simplified,sahoo2024simple,gat2024discreteflowmatching}, proteins \citep{campbell2024generative}, images \citep{austin2021structured,shi2024simplified},
and even executable code \citep{gat2024discreteflowmatching}.
However, the design space of these models is currently rather limited, with many recent works instead focusing solely on the case of masking as a corruption process \citep{shi2024simplified,sahoo2024simple}. 
The masked construction is an extension of masked pretraining \citep{devlin2018bert,yang2019xlnet}, but it does not fully embody the concept of iterative refinement as it is equivalent to learning autoregressive models for every ordering \citep{hoogeboom2021autoregressive,chang2022maskgit}, and it has been noticed that some of the recent reported progress was \rev{actually misleading} due to low-precision sampling \citep{zheng2024masked} rather than the explicit design choice of masking as a corruption process.
In spite of this, the masked construction has often been found to be the best performing choice out of the limited family of corruption processes previously considered tractable \citep{austin2021structured,campbell2024generative,gat2024discreteflowmatching}.\looseness=-1

We instead take a holistic view on constructing discrete Flow Matching models, massively expanding the design space to enable arbitrary probability paths, or colloquially, corruption processes, grounded in the framework of continuous-time Markov chains (CTMC). We list our contributions:
\begin{enumerate}
    \item Analogous to the continuous setting, we find that an infinite number of velocities can generate any given probability path. In order to reduce this search space, we consider a decomposition into a \emph{probability-advancing} velocity and a \emph{probability-preserving} velocity.
    \item To explore the space of probability-advancing velocities, we motivate a family of closed-form velocities that be formulated as optimizing kinetic energy. In particular, we are the first to formulate velocities that can work with any choice of probability path, completely opening up the design space of probability paths, \eg, domain-specific constructions, while recovering the velocities used by prior works for existing paths in the literature. 
    \item We also find that the probability path itself can also be optimized with the same kinetic energy criterion. A closed-form solution surprisingly recovers the mixture paths considered by \citet{gat2024discreteflowmatching} but with novel source-dependent schedulers.
    \item We derive the ELBO for discrete Flow Matching models in full generality. This leads to an improved ELBO for training mixture probability paths that has not been used before, and recovers the ELBO derived by \citet{shi2024simplified} for the masked construction. We find that with this ELBO, our kinetic-optimal mixture paths outperform the masked construction.
\end{enumerate}

\section{Background: Discrete Flow Matching}

We are interested in learning a generative model that approximates a data distribution $q(x)$, where $x = (x^1,x^2,\ldots,x^D) \in \gS = \gT^D$ with $\gT=[K]\defe\set{1,2,\ldots,K}$ being a discrete set of possible token values, and $D\in \Nat$ is number of discrete variables.
For brevity and without loss of generality, we consider all dimensions to have the same number of discrete values. 

\textbf{Probability paths.}
We denote by $p(x)$ and $q(x)$ the source and target, respectively, probability mass functions (PMFs) over the state space $\gS$.
%
We consider \emph{probability paths} $p_t(x)$, $t\in[0,1]$, to be time-dependent PMFs taking the form
\begin{talign}\label{eq:prob_path}
p_t(x) \defe \sum_{x_1 \in \gS} p_t(x | x_1) q(x_1), \text{ where } p_t(x|x_1) \defe \prod_{i=1}^D p_t(x^i|x_1^i),     
\end{talign}
and $p_t(x^i|x_1^i)$ is a \emph{conditional probability path} which interpolates between a simple PMF at time $t=0$ and a delta PMF centered around $x^i_1$ at $t=1$. That is, we assume the boundary conditions 
$p_0(x^i|x^i_1)=p(x^i)$ and $p_1(x^i|x^i_1)=\delta_{x^i_1}(x^i)$.
Hence we can interpret these probability paths $p_t(x)$ in \eqref{eq:prob_path} as interpolating between a factorized source distribution $p(x) \defe \prod_{i=1}^D p(x^i)$ and the data distribution $q(x)$. 
A common family of probability paths used in previous works is the collection of \emph{mixture paths} \citep{gat2024discreteflowmatching}, with $x_1^i$-dependent schedulers similar to \citet{shi2024simplified}:
\begin{equation}\label{e:mixture_token_dependent}
    p_t(x^i|x_1^i) = (1-\kappa_t(x_1^i))p(x^i) + \kappa_t(x_1^i)\delta_{x_1^i}(x^i),
\end{equation}
where $\kappa_0(\cdot) = 0$ and $\kappa_1(\cdot) = 1$ to satisfy the boundary conditions. Specifically, with $p(x^i) = \delta_\dummy(x^i)$ we recover the masked construction \citep{shi2024simplified,sahoo2024simple}.


\textbf{Probability velocities.}
As our generative process, we simulate a Continuous Time Markov Chain (CTMC) $(X_t)_{t\in[0,1]}$ in $\gS$ such that its time marginals follow a prescribed probability path,
\begin{equation}\label{e:X_t_marginal}
    X_t \sim p_t.
\end{equation}
In order to do so, we define the concept of a \emph{probability velocity}, also known as a rate matrix. 
We say that a probability velocity $u_t$ \textit{generates} $p_t$ if $u_t$ characterizes a Markov process $X_t$ with marginal $p_t$ (\eqref{e:X_t_marginal}) for all $t\in [0,1)$ in the following sense: 
\begin{talign}\label{e:generation_general}
    \P(X_{t+h}=x\ \vert \  X_t = z) = \delta_{z}(x) + h u_t(x, z) + o(h),
\end{talign}
where $o(h)$ denotes a function which is asymptotically smaller than $h$, \ie, $\lim_{h\too 0}\nicefrac{o(h)}{h} = 0$. Intuitively, $u_t$ describes the Markov transition of $X_t$ for small step sizes $h>0$. We note that for \eqref{e:generation_general} to be a valid PMF, $u_t$ must at least satisfy the \emph{Rate Conditions}: 
\begin{equation}\label{e:rate_conditions_general}
     \begin{matrix*}[l]
        u_t(x,z)\geq 0 \text{ for all } x\ne z \text{ and } \sum_{x} u_t(x,z) = 0
    \end{matrix*} \qquad \blacktriangleright \text{ Rate Conditions}
\end{equation}

\textbf{Single-variable-change probability velocities.} 
It is natural to consider modeling a CTMC process $X_t$ over $\gS$ by defining a $u_t(x,z)$ for all pairs $x, z \in \gS$. 
However, the state space is of size $|\gT|^D$ so this is generally prohibitive for high dimensions.
A remedy is to consider rates that only allow a state to change in a \textit{single} variable \citep{campbell2022continuous}, \eg, in the following example we only change the variable at the $i$-th coordinate:
\begin{equation}\label{e:example_one_token_change}
    (z^1,\ldots,z^{i-1},\boldsymbol{z^i},z^{i+1},\ldots,z^D) \too (z^1,\ldots,z^{i-1},\boldsymbol{x^i},z^{i+1},\ldots,z^D).
\end{equation}
To model only such changes we restrict our attention to velocities of the form $u_t^i(x^i,z)$ that describe the probability rate between the state $z$ and the state with the $i$-th coordinate replaced, \ie, as described in the r.h.s.~in \eqref{e:example_one_token_change}. We can express the full velocity $u_t(x,z)$ via $u_t^i(x^i,z)$ as 
\begin{equation}\label{e:factorized_velocity}
 u_t(x,z) = \sum_{i=1}^D u_t^i(x^i,z)\rev{\prod_{j 
\ne i}\delta_{z^j}(x^j)},   
\end{equation}
which states the probability velocity between two states $z\too x$ is zero if they differ by more than one variable and equal $u_t^i(x^i,z)$ if they differ by exactly one variable. 
Plugging this velocity into \eqref{e:generation_general}, it can be shown that \citep{gat2024discreteflowmatching}:
\begin{talign}\label{e:one_token_u_generation}
    \P(X_{t+h}=x \ \vert \ X_t=z) = \prod_{i=1}^D \left[ \delta_{z^i}(x^i) + hu_t^i(x^i,z) \right] + o(h)
\end{talign}
This implies we can sample each variable $X_{t+h}^i$ \emph{independently} from the distribution $\delta_{z^i}(x^i) + hu_t^i(x^i,z)$, and only incur an error of $o(h)$.

\textbf{The marginal velocity.} 
Previous works \citep{campbell2024generative,gat2024discreteflowmatching} have shown that constructing a generating velocity for $p_t(x)$ can be achieved by considering only the conditional probability paths in \eqref{eq:prob_path}. That is, assume we have conditional velocities $u^i_t(x^i, z^i | x_1^i)$, which are velocities in the state space $\gT$, that generate the conditional paths $p_t(x^i|x_1^i)$ in \eqref{eq:prob_path}.
Then a \emph{marginal velocity} $u^i_t(x^i, z)$ that generates $p_t(x)$ 
takes the form:
\begin{talign}\label{e:u_t_marginalization}
    u^i_t(x^i,z) = \sum_{x_1^i\in \gT} u^i_t(x^i,z^i|x_1^i)p^i_{1|t}(x_1^i|z)
\end{talign}
where $p^i_{1|t}(x_1^i|z)$ is the posterior probability of the $i$-th token taking the value $x_1^i$, \ie, 
\begin{talign}\label{eq:factorized_posterior}
    p^i_{1|t}(x^i|z) = \sum_{x_1\in \S} \delta_{x_1^i}(x^i) \frac{p_t(z|x_1)q(x_1)}{p_t(z)}.
\end{talign}
Parameterizing the \emph{factorized posterior} $\prod_{i=1}^D p^i_{1|t}$ is an approach taken by prior works \citep{austin2021structured,campbell2022continuous}. To train, a simple option is the cross-entropy objective:
\begin{talign}\label{eq:cross_entropy_p1|t}
    \mathcal{L}_\text{CE}(\theta) = \E_{t\sim U[0,1], x_1 \sim q(\cdot), x \sim p_t(\cdot | x_1)} \left[ 
    - \sum_{i=1}^D \log p_{1|t}^{\theta,i}(x_1^i | x)
    \right].
\end{talign}
We use this training loss for general probability paths as it is generally applicable. 
However, for the case of mixture paths (\eqref{e:mixture_token_dependent}) it is possible to derive a tractable ELBO as the marginal $u_t$ can be written in closed form without a summation as in \eqref{e:u_t_marginalization}. We cover this later in \Cref{sec:elbo}.

\section{Sample generation through the factorized posterior}\label{ss:velocity_param_and_sampling}
The most direct approach to sample from this model is to use the marginal velocity $u_t(x^i,z)$, \eg, with a first-order sampling scheme defined by removing the  $o(h)$ term in \eqref{e:one_token_u_generation}, \ie, given $X_t$, we advance time with step size $h$ by sampling $X_{t+h}^i$ according to
\begin{equation}\label{e:euler}
    X_{t+h}^i \sim \delta_{X_t^i}(\cdot) + hu^i_t(\cdot, X_t), 
\end{equation}
for each $i\in [D]$, where $u_t^i$ is computed with \eqref{e:u_t_marginalization}. 
However, for general discrete paths this sampling procedure is intractable for large discrete spaces $\gT$ as computing $u_t^i(x^i,z)$ with \eqref{e:u_t_marginalization} for all $x^i\in \gT$ has a computational complexity of $|\gT|^2$. 

Alternatively, we propose a more efficient sampling scheme by noticing that 
\begin{talign}
    \delta_{z^i}(x^i) + h u^i_t(x^i, z) &\overset{(\ref{e:u_t_marginalization})}{=} \sum_{x_1^i\in \gT}\brac{\delta_{z^i}(x^i) + hu^i_t(x^i,z^i|x_1^i)} p^i_{1|t}(x_1^i,z), 
\end{talign}
which leads to a sampling process that avoids computing the full marginal velocity:
given the current state $X_t$, sample $X_1$ from the factorized posterior, then sample $X_{t+h}$. That is, for each $i\in [D]$,
\graybox{\vspace{0.8em}
\begin{enumerate}
    \item[1)] Sample $X^i_1 \sim p^i_{1|t}(\cdot| X_t )$;  and 
    \item[2)] Sample $X_{t+h}^i \sim \delta_{X_t^i}(\cdot) + h u^i_t(\cdot, X_t^i | X^i_1)$.
\end{enumerate}
}
This sampling procedure still results in $X_t$ with the same time marginals while avoiding the computational cost of the summation in \eqref{e:u_t_marginalization}. 
To enable the use of any step size $h$, we use a slightly modified step 2; see \Cref{app:always_valid} for more details and pseudocode in \Cref{alg:alg_sampling}.

\section{Kinetic optimal velocities and probability paths}

We first decouple the design space of probability paths and their generating velocities, providing the means to effectively explore this large design space.
This section covers two contributions: (\textit{i}) we propose a family of kinetic optimal (KO) velocities that generates any given probability path, and (\textit{ii}) we solve for kinetic optimal probability paths, recovering a special case of mixture paths. 
The first contribution enables us to work with general discrete probability paths. 
The second contribution justifies the choice of mixture probability paths used by \citet{gat2024discreteflowmatching} but offers novel $x_1^i$-dependent schedulers. 
For both, we center our designs based on optimizing a discrete notion of kinetic energy \citep{peyre2019computational}.


\textbf{Notation.} As the discussion in this section applies to arbitrary probability paths and discrete state spaces, we will use a simplified notation, where our state space is now $\gT$ and for states we use $x,z\in \gT$, abusing a bit \rev{the} previous notation (where $x^i,z^i\in \gT$). Furthermore, we will denote by $p_t(x)$ and $u_t(x,z)$ an arbitrary probability path and velocity field in $\gT$, respectively. 


\textbf{Continuity Equation.} 
Given a probability path $p_t(x)$, the entire collection of velocities $u_t(x,z)$ generating $p_t(x)$ are the solutions to the Continuity Equation (a.k.a.~the Kolmogorov forward equation) that also satisfy the Rate Conditions. It is useful to formulate the Continuity Equation through the \emph{flux} $j_t$, that is 
\begin{talign}\label{e:CE}
    \dot{p}_t(x) + \divv_x(j_t) = 0, \qquad \forall x\in \gT,  \qquad \text{ with }  j_t(x, z)=u_t(x, z)p_t(z).
\end{talign}
Intuitively, the flux $j_t(x,z)$ quantifies the amount of probability mass per unit of time moving from state $z$ to state $x$. 
The \emph{divergence operator} then measures the total outgoing flux minus the total incoming flux, which in the discrete case takes the form 
\begin{talign}\label{eq:discrete_divergence}
    \text{div}_x (j_t) = \sum_{z\neq x} j_t(z, x) - \sum_{z\neq x} j_t(x, z).
\end{talign}

\textbf{Velocity from flux.} Given a flux $j_t$ satisfying the Continuity Equation (\eqref{e:CE}) we can get a velocity from the flux by defining for $x\ne z$,
\graybox{
\begin{talign}\label{e:velocity_from_flux}
    u_t(x,z) = j_t(x,z) / p_t(z) \;\;\text{ if } p_t(z)>0, \qquad \text{ else }\;
    u_t(x,z) = 0,
\end{talign}
}
and the case $x=z$ is uniquely set by the Rate Conditions\rev{ (\ref{e:rate_conditions_general}), $u_t(z,z)=-\sum_{x\ne z}u_t(x,z)$}. The velocity defined in this way will satisfy the Continuity Equation and the Rate Conditions if the flux satisfies the following conditions:
\begin{alignat}{2} \label{e:non_negative_flux}
& j_t(x,z)\geq 0, \text{ for } x \ne z \qquad &&\blacktriangleright \text{Non-negativity}  \\
\label{e:safe_flux}
&p_t(z) = 0 \ \Rightarrow \ j_t(x,z)=0 \qquad  &&\blacktriangleright \text{Safe Flux Condition}
\end{alignat}
Intuitively, the Safe Flux Condition ensures no flux is leaving a zero probability state $z$. 
\begin{proposition}
Given a non-negative safe flux $j_t$ that satisfies the Continuity Equation, the velocity defined in \eqref{e:velocity_from_flux} satisfies the Rate Conditions and generates the $p_t$ probability path. 
\end{proposition}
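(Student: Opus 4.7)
The plan is to unfold the definition in \eqref{e:velocity_from_flux} and verify separately (i) the Rate Conditions \eqref{e:rate_conditions_general}, and (ii) that the resulting $u_t$ generates $p_t$. Since the paragraph preceding \eqref{e:CE} asserts that generating $p_t$ is equivalent to satisfying the Rate Conditions together with the Continuity Equation, the task reduces to checking these two pieces. The whole argument is a direct computation; the one subtlety is how the Safe Flux Condition \eqref{e:safe_flux} covers the case $p_t(z)=0$, where the quotient $j_t(x,z)/p_t(z)$ is undefined.

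First I would verify the Rate Conditions. Off-diagonal non-negativity is immediate: for $x\ne z$ with $p_t(z)>0$, $u_t(x,z)=j_t(x,z)/p_t(z)\ge 0$ since both numerator and denominator are non-negative by \eqref{e:non_negative_flux} and the fact that $p_t$ is a PMF; and for $p_t(z)=0$, $u_t(x,z)=0\ge 0$ by construction. The column-sum condition $\sum_{x} u_t(x,z)=0$ holds by the stipulated choice of the diagonal $u_t(z,z)=-\sum_{x\ne z} u_t(x,z)$.

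Second I would check that the Continuity Equation holds for the flux reconstructed from $u_t$, namely $\tilde j_t(x,z)\defe u_t(x,z)p_t(z)$ for $x\ne z$. On the set $\{z:p_t(z)>0\}$ we have $\tilde j_t(x,z)=j_t(x,z)$ by construction. On $\{z:p_t(z)=0\}$, $\tilde j_t(x,z)=u_t(x,z)\cdot 0=0$, and by the Safe Flux Condition \eqref{e:safe_flux}, $j_t(x,z)=0$ as well. Hence $\tilde j_t\equiv j_t$ on all off-diagonal pairs, so the divergence expression \eqref{eq:discrete_divergence} is identical for the two fluxes, and $\dot p_t(x)+\mathrm{div}_x(\tilde j_t)=0$ follows from the assumed $\dot p_t(x)+\mathrm{div}_x(j_t)=0$. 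Combined with the Rate Conditions from the previous step, this is exactly what it means for $u_t$ to generate $p_t$.

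The main (and only) obstacle is the bookkeeping at the boundary set $\{z:p_t(z)=0\}$: the formula $j_t(x,z)/p_t(z)$ is undefined there, and one must check that the convention $u_t(x,z)=0$ still produces a flux that matches $j_t$. The Safe Flux Condition is precisely the hypothesis that sidesteps this division-by-zero while keeping the reconstructed flux consistent with the original; once it is invoked, the proof has no further content beyond unwinding definitions.
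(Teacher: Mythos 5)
Your proof is correct, and its structure is the natural one: verify the two Rate Conditions directly, then verify that the flux reconstructed from $u_t$ agrees with $j_t$ on all off-diagonal pairs so that the Continuity Equation transfers, invoking the paper's stated equivalence between "generates $p_t$" and "satisfies Rate Conditions plus Continuity Equation." The paper states this proposition without an explicit proof, so there is no alternative argument in the paper to compare against; your write-up makes explicit exactly the bookkeeping the paper leaves implicit, and you correctly identify the one nontrivial point, namely that the Safe Flux Condition is what guarantees $j_t(x,z)=0$ whenever $p_t(z)=0$, so the convention $u_t(x,z)=0$ on that set recovers the same off-diagonal flux and hence the same divergence in \eqref{eq:discrete_divergence}.
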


\textbf{Kinetic optimality.} Motivated by the approach employed in the continuous case of minimizing the kinetic energy for the conditional velocities \citep{lipman2022flow,shaul2023kinetic}, we take a similar approach for finding velocities for the discrete case. 
The standard convex formulation of the kinetic energy adapted to the discrete case is \citep{peyre2019computational}:
\graybox{
\begin{subequations}\label{e:kinetic_optimal}
\begin{talign} \label{e:kinetic_optimal_energy}
    \min_{p_t,j_t} & \quad \int_0^1  \sum_{x\ne z} \frac{w_t(x,z)}{p_t(z)}j_t(x,z)^2 dt &&\blacktriangleright \text{ Kinetic Energy }\\ \label{e:general_div}
    \text{s.t.} &\quad  \divv_x(j_t) = -\dot{p}_t(x), \qquad \ \  \forall x\in \gT &&\blacktriangleright \text{ Continuity Equation } \\ \label{e:general_positive}
    &\quad j_t(x,z) \geq 0, \qquad \qquad \forall x\ne z\in \gT &&\blacktriangleright \text{ Non-negative flux }
    \\ \label{e:general_boundary}
    &\quad p_0=p,\quad  p_1=q &&\blacktriangleright \text{ Boundary conditions}
\end{talign}
\end{subequations}
}

where $w_t(x,z)> 0$ is some problem-dependent weighting; a higher weight implies a smaller flux from $z\too x$, \ie, the higher this value the smaller the velocity $u_t(x,z)$. 
The optimality criterion (\eqref{e:kinetic_optimal_energy}) is the kinetic energy, equivalently $\frac{j_t(x,z)^2}{p_t(z)} = u_t(x,z)^2 p_t(z)$. 
The benefit of formulating in terms of the flux (instead of the velocity) is that the problem becomes convex in its unknowns $(p_t,j_t)$, and in particular the Continuity Equation constraint in (\ref{e:general_div}) is linear. Lastly, in case of $\frac{w_t(x,z)}{p_t(z)}=\infty$ the energy in \eqref{e:kinetic_optimal_energy} is \emph{defined} to be $0$ if $j_t(x,z)=0$, and $\infty$ if $j_t(x,z)>0$. Therefore, to ensures the solution $j_t^\star$ is safe (\eqref{e:safe_flux}) we ask that $w_t$ satisfies:
\begin{talign}\label{e:safe_weight}
    p_t(z)=0 \Rightarrow \frac{w_t(x,z)}{p_t(z)} =\infty \quad \blacktriangleright\text{ Safe Weight Condition}
\end{talign}
and that problem \ref{e:kinetic_optimal} is \emph{feasible}, \ie, it has a finite energy solution. 
%
Although problem \ref{e:kinetic_optimal} is convex, solving it for a general $w_t$ requires numerical approximation. Since we want to solve it for conditional probability paths with different $x_1\in \gT$, \ie, $q(x)=\delta_{x_1}(x)$, this can be computationally challenging. Instead, we will explore cases of $w_t$ where problem (\ref{e:kinetic_optimal}) is solvable in \emph{closed-form}. 
We start with assuming $p_t$ is known/given, and find the kinetic optimal velocity $u^{\star}_t$, then afterwards we discuss optimizing the $p_t$ as well.


\subsection{Kinetic optimal velocity}
Assuming $p_t>0$ is fixed in (\ref{e:kinetic_optimal}), our goal is to find the kinetic optimal solution $j_t^\star$, and consequently obtaining a velocity $u_t^\star$ via (\ref{e:velocity_from_flux}). 
One observation we make is that (\ref{e:kinetic_optimal}) can be efficiently solved when \emph{symmetric}, \ie, when  $\frac{w_t(x,z)}{p_t(z)}=\frac{w_t(z,x)}{p_t(x)}$.
As we prove in \Cref{a:kinetic_optimal_proofs}, 
(\ref{e:kinetic_optimal}) can be efficiently solved via the following linear relaxation:
\begin{talign}\label{e:relaxation_lambda}
        \sum_z \frac{p_t(z)}{w_t(x,z)}\brac{f_t(x)-f_t(z)} = \dot{p}_t(x), \qquad \forall x\in \gT
\end{talign}
where $f_t:\gT\too \Real$ is the unknown function over the state space. The linear equation in (\ref{e:relaxation_lambda}) is of \emph{Laplacian form}, and many properties (including closed-form  solutions) are known in many cases \citep{vishnoi2012}. The solution \rev{$f_t$} to (\ref{e:relaxation_lambda}) is unique up to a global constant\rev{ and using $f_t$ we construct the kinetic optimal flux,} 
\begin{talign}\label{e:optimal_j_given_p}
 j^\star_t(x,z)\defe \frac{p_t(z)}{w_t(x, z)} \brac{f_t(x)-f_t(z)}_+,   
\end{talign}
where $\brac{s}_+=\max\set{s,0}$ is the ReLU operator. This provides a solution to (\ref{e:kinetic_optimal}) with a fixed and positive $p_t$. Consequently, using (\ref{e:velocity_from_flux}) we get the kinetic optimal velocity. We have shown that a certain family of kinetic optimal velocities can be computed by solving a linear system (\ref{e:relaxation_lambda}) for arbitrary probability paths $p_t(x)$ over state-space $\gT$. Next we will further instantiate this family and provide some closed form solution for $j_t^\star$ and $u_t^\star$. 

\textbf{Closed-form $u_t$.}
%
We will consider the case where $w_t(x,z)=\frac{p_t(z)}{\tau_t(x)\tau_t(z)}$, and $\tau_t:\gT\too \Real_{\geq 0}$ is a design choice of our method. To ensure $w_t$ is safe (\ref{e:safe_weight}) we require that $p_t(z)=0$ implies $\tau_t(z)=0$. The solution $f_t$ to (\ref{e:relaxation_lambda})---which can be checked with substitution---is: 
\begin{talign}\label{e:closed_form_rank_1}
    f_t(x) = \frac{1}{\sum_{s \in \gT}\tau_t(s)}\frac{\dot{p}_t(x)}{\tau_t(x)}.
\end{talign}
%
One choice is $\tau_t(x) = \mathbbm{1}_{[p_t(x) > 0]}$, that leads to the Kinetic Optimal flux 
\begin{talign}\label{eq:Campbell_flux}
    j^\star_t(x, z) = \frac{1}{|\gT|}\left[ \partial_t p_t(x) - \partial_t p_t(z) \right]_+, \qquad \text{ for } x\ne z
\end{talign}
which upon converting to velocity via (\ref{e:velocity_from_flux}) recovers the velocity proposed in \cite{campbell2024generative} for positive paths, $p_t>0$.  Note however, that the above flux is not safe (does not satisfy \eqref{e:safe_flux}) and if $p_t(z)= \eps$ the flux $j^\star_t(x,z)$ for some general $x$ is not necessarily small, showing a potential numerical issue.  \cite{campbell2024generative} formulate a limit case for general $p_t$ that also requires adding an extra assumption on $p_t$ (that $p_t(x)=0 \Rightarrow \dot{p}_t(x)=0$), which does not hold even for common probability paths that are typically used, such as the masked mixture path with linear schedulers. 


Alternatively, we propose a more numerically stable choice. Consider $\tau_t(x) = p_t(x)$, \ie,
\begin{talign}\label{e:neta_weight}
    w_t(x,z) = 1/p_t(x).
\end{talign}
This results in $f_t(x) = \dot{p}_t(x)/p_t(x)$, and the kinetic optimal flux in this case is:
\graybox{
\begin{align}\label{eq:Neta_flux}
    j^\star_t(x, z) =  \left[ p_t(z)\dot{p}_t(x) - \dot{p}_t(z) p_t(x) \right]_+, \qquad \text{ for } x\ne z
\end{align}
}
Note that in contrast to before, this flux is safe (satisfies \eqref{e:safe_flux}) and therefore works for general $p_t$. Furthermore, (\ref{eq:Neta_flux}) exhibits stable limiting behavior for continuously differentiable $p_t$: when $p_t(z) \too 0$, so too will $j^\star(x,z)\too 0$.

We note that for uniform and mask source distributions with the mixture path (\ref{e:mixture_token_dependent}), the velocity considered by \citet{campbell2024generative} and our velocity resulting from (\ref{eq:Neta_flux}) coincide. 
However, for mixture paths (\ref{e:mixture_token_dependent}) and general discrete paths, they generally do not coincide. 
Additionally, the choice of velocity in (\ref{eq:Neta_flux}) also recovers the velocities used by \citet{gat2024discreteflowmatching} for mixture probability paths.
See \Cref{app:ko_velocity_for_mixture} for detailed derivations.
Finally, we discuss a broader family of closed-form velocities involving different choices of $\tau_t$ in \Cref{app:power_inf_velocity}, which we find can significantly boost performance at low-cost sampling regimes.

\textbf{Metric-induced $p_t(x)$.} The velocity resulting from (\ref{eq:Neta_flux}) can be applied to any user-defined $p_t$. We propose metric-induced conditional probability paths of the form
\begin{talign}\label{eq:metric_prob_path}
    p_t(x | x_1) =  \text{softmax}\left( -\beta_t \dist(x, x_1) \right),
\end{talign}
where \rev{$\beta:[0,1]\too\R_{\ge 0}$} is a monotonic scheduler with $\beta_0 = 0$, $\beta_1 = \infty$, and \rev{$\dist:\gT\times\gT\too\R_{\ge 0}$ such that $\dist(x,x_1)=0\Leftrightarrow x=x_1$}
,interpreted loosely as a metric over discrete values. If we apply the flux in (\ref{eq:Neta_flux}) for the paths in (\ref{eq:metric_prob_path}) and simplify, we obtain the velocity:
\begin{talign}
    u_t^\star(x, z | x_1) &= p_t(x | x_1) [\partial_t\log p_t(x | x_1) - \partial_t\log p_t(z | x_1)]_+ \\
    &= p_t(x | x_1) \dot{\beta}_t [\dist(z, x_1) - \dist(x, x_1)]_+.
    \label{eq:metric_velocity}
\end{talign}
This velocity has the property that we only move from state $z$ to state $x$ if $x$ is closer than $z$ to $x_1$, \ie, $\dist(x, x_1) < \dist(z, x_1)$, hence resulting in a flow that only moves closer to $x_1$.

\subsection{Kinetic Optimal probability paths}
Interestingly, for the weighting choice that we have motivated for the numerically stable velocity (\ref{e:neta_weight}), it is also possible to solve for the kinetic optimal probability path $p_t^\star$. 
As we show in \Cref{a:kinetic_optimal_proofs}, in this case, the problem  (\ref{e:kinetic_optimal}) can be formulated equivalently as 
\begin{subequations}\label{e:sphere_kinetic_optimal}
\begin{alignat}{2}
    \textstyle \min_{a_t} & \quad \int_0^1  \sum_{x} \dot{a}_t(x)^2 \, dt &&\blacktriangleright \text{ Kinetic Energy }\\  \label{e:sphere_contraints}
    \text{s.t.} &\textstyle\quad \sum_x a_t(x)^2 = 1, \qquad \qquad \forall t\in [0,1] &&\blacktriangleright \text{ Hypersphere constraints}
    \\
    &\textstyle\quad a_0(x)=\sqrt{p(x)},\quad  a_1(x)=\sqrt{q(x)} \qquad \qquad &&\blacktriangleright \text{ Boundary conditions }
\end{alignat}
\end{subequations}
where $a_t(x)=\sqrt{p_t(x)}$. Problem (\ref{e:sphere_kinetic_optimal}) is the kinetic energy of a curve over the hypersphere connecting $\sqrt{p}$ and $\sqrt{q}$. The optimal solution thus corresponds to the geodesic curve on the hypersphere, 
\begin{talign}\label{e:ko_solution}
    a_t(x) = \frac{\sin (1-t)\Omega}{\sin \Omega}\sqrt{p(x)} + \frac{\sin t\Omega}{\sin \Omega}\sqrt{q(x)},\ \  \text{where } \Omega = \arccos\parr{\sum_z \sqrt{p(z)q(z)}},
\end{talign}
and consequently the optimal probability path and velocity for (\ref{e:sphere_kinetic_optimal}) are 
\begin{talign}\label{e:ko_pt_and_ut}
    p_t^\star(x) = a_t^2(x), \qquad {
    u_t^\star(x, z) =a_t^2(x)\brac{\partial_t \log a_t^2(x) - \partial_t \log a_t^2(z)}_+
    }
\end{talign}
In the particular case of conditional probability paths $q(x)=\delta_{x_1}(x)$, we get that the optimal solution recovers the \emph{mixture path} (\eqref{e:mixture_token_dependent}) with a specific $x_1$-dependent scheduler:
\begin{talign}\label{e:ko_scheduler}
    \kappa_t(x_1) = 1- \frac{\sin^2 (1-t) \Omega(x_1)}{\sin^2 \Omega(x_1)}, \qquad \text{ where } \Omega(x_1) = \arccos \sqrt{p(x_1)}.
\end{talign}
This justifies the mixture paths (\ref{e:mixture_token_dependent}) as kinetic optimal, and furthermore, it naturally utilizes an $x_1$-dependent scheduler for general source distributions $p$ when $p(x_1) > 0$. 


\section{Probability-preserving velocities}
\label{sec:corrector_velocity}

While we have found a particular flux $j_t^\star$, the space of fluxes for a given $p_t$ is much larger, and in this section we show how to explore it further.  
We first observe that since the Continuity Equation (\ref{e:CE}) is a linear equation, any flux $j_t$ satisfying this equation can be written as a sum of two fluxes:
\begin{equation}\label{e:flux_decomposition}
    j_t = j_t^\star + j_t^\perp,  \qquad \text{ where } \divv_x(j_t^\perp) = 0,
\end{equation}
where $j_t^\star$ is a particular solution to the Continuity Equation and $j_t^\perp$ is a solution to the homogenous version of the equation, \ie, \textit{divergence-free}.
We call the velocity resulting from $j_t^\perp$ a \emph{probability-preserving}, or corrector, velocity as sampling with this velocity has $p_t$ as a steady-state. 
%
%
For simplicity, we mainly consider the special case of symmetric flux. 
Symmetry is a sufficient condition for being divergence-free as is evident from (\ref{eq:discrete_divergence}). 
A natural choice for a symmetric flux is to consider a symmetrization of (\ref{e:optimal_j_given_p}) taking the form 
\begin{talign}\label{e:symmetric_flux}
    j_t^\perp(x, z) = \frac{p_t(z)}{w_t(x,z)} \left| f_t(x)-f_t(z) \right|, \qquad \text{ and } u_t^\perp(x, z) = j_t^\perp(x, z) / p_t(z),
\end{talign}
for any function $f_t$. For convenience, we will simply re-use the same $f_t$ that comes from optimizing the kinetic energy (\ref{e:kinetic_optimal}), \eg the same as in (\ref{eq:Neta_flux}). 
In contrast to the kinetic optimal velocity, which results in a unidirectional flow in the sense that samples will only move from lower $f_t(\cdot)$ to higher $f_t(\cdot)$, the symmetric flux in (\ref{e:symmetric_flux}) results in a bidirectional flow that allows equal movement between any two states with non-equal $f_t(\cdot)$. 
Hence $j_t^\perp$ acts as a corrector to redirect samples back to previous states in a way that leaves $p_t$ invariant. \vspace{-5pt}


\section{ELBO for Discrete Flow Matching}\label{sec:elbo}\vspace{-5pt}

We show in \Cref{app:elbo} that we can produce a continuous-time ELBO bound on the likelihood $\log p^{\theta}_1(x_1)$ for any conditional probability path and conditional probability velocity in terms of the marginal $u^i_t(x^i, z)$ and conditional $u^i_t(x^i, z^i | x_1^i)$ as follows 
\begin{equation}\label{eq:continuous_time_elbo}
\begin{split}
    \textstyle \log p_1(x_1) \ge 
    \int_{0}^1 \E_{x_t\sim p_t(\cdot|x_1)} \sum_{i=1}^D \biggr[& u_t^{i}(x_t^i, x_t) - u^i_t(x_t^i, x_t^i|x^i_1)\\
    \vspace{-0.5em}
    &\textstyle + \sum_{y^i\ne x_t^i}u^i_t(y^i, x_t^i|x^i_1)\log\parr{\frac{u_t^i(y^i, x_t)}{u^i_t(y^i, x_t^i|x^i_1)}}\biggr] \mathrm{d} t
\end{split}
\end{equation}
Evaluating this ELBO is difficult for the same reason as sampling in \Cref{ss:velocity_param_and_sampling}, for large discrete spaces $\gT$ computing 
(\ref{e:u_t_marginalization}) for all $x^i\in \gT$ has a computational complexity of $|\gT|^2$.  However, for mixture paths (\ref{e:mixture_token_dependent}), our conditional velocity resulting from (\ref{eq:Neta_flux}) is used to obtain a closed-form expression for the marginal velocity (see \Cref{app:marginal_ut_mixture_path}),
yielding a tractable ELBO for mixture paths:
\begin{equation}\label{e:elbo_mixture_path}
\begin{split}
    \textstyle \log p_1^{\theta}(x_1) \ge \int_{0}^1 \E_{x_t\sim p_t(\cdot|x_1)} \sum_{i=1}^D \biggr[ & \lambda(x_t^i) p^{\theta}_{1|t}(x_t^i|x_t) -\sum_{y^i}\lambda(y^i)p^{\theta}_{1|t} (y^i|x_t)+ \\
    &\textstyle + (1-\delta_{x^i_1}(x_t^i))\lambda(x_1^i)\parr{1+\log p^{\theta}_{1|t}(x_1^i|x_t)}\biggr] \mathrm{d} t,
\end{split}
\end{equation}
where $\lambda(x) = \frac{\dot{\kappa}_t(x)}{1-\kappa_t(x)}$.
This ELBO has not been used previously, \eg, \citet{campbell2022continuous} had to resort to a doubly stochastic estimator.
Specifically for the masked construction, we recover the ELBO used by \citet{zheng2024masked} for $x_1^i$-independent schedulers and used by \citet{shi2024simplified} for $x_1^i$-dependent schedulers; see \Cref{a:elbo_masked}. \vspace{-5pt}

\section{Related Work}\vspace{-5pt}

\textbf{Generative modeling through marginalization.} 
Denoising diffusion models \citep{sohl2015deep,ho2020denoising,song2020score} construct generative models by reversing a noising process. The Flow Matching framework \citep{lipman2022flow,albergo2023stochastic,liu2022flow} shares similar traits but instead constructs generative models through a marginalization of conditional Markov processes, allowing a larger design space of probability paths. These types of models can be trained at scale both efficiently and stably relative to other frameworks, and thus have seen massive success in the large-scale generation of images \citep{rombach2022high,esser2024scaling}, videos \citep{singer2022make}, and audio \citep{le2024voicebox,vyas2023audiobox}.

\textbf{Continuous-time Markov Chains.} These aforementioned frameworks have been adapted to the discrete domain by making use of Continuous-time Markov Chains (CTMC) as the choice of generative process \citep{campbell2022continuous,campbell2024generative,lou2023discrete,gat2024discreteflowmatching}. Many works discuss both a uniform noise and mask construction \citep{campbell2024generative,lou2023discrete}; however,
more recent works have focused more and more on the simple masked construction where each discrete variable is randomly replaced with a dummy or mask token \citep{sahoo2024simple,shi2024simplified} as it often performs favorably compared to uniform noise. However, the simple masked construction leads to a generative model that is equivalent to any-order autoregressive modeling under mild assumptions \citep{hoogeboom2021autoregressive,zheng2024masked}.

\textbf{Any-order autoregressive modeling.} While autoregressive models prespecify a fixed ordering, any-order autoregressive models learn conditional probabilities for every ordering.
Training is often carried out by randomly masking out parts of the data sample \citep{devlin2018bert,yang2019xlnet}.
Some works have focused on architectural choices: \citet{germain2015made} randomly masks out weights to induce a randomized ordering, while \citet{pannatier2024sigmagpts} uses a fixed causal attention architecture but randomly permutes the input ordering, with the end goal of learning all combinations of conditional distributions so that generation of the variables can be done in any order. The ordering itself is often optimized further by the use of heuristic scoring functions \citep{chang2022maskgit,ziv2024magnet}.\looseness=-1 \vspace{-5pt}

\begin{figure}[t]
    \centering
    \includegraphics[width=\linewidth]{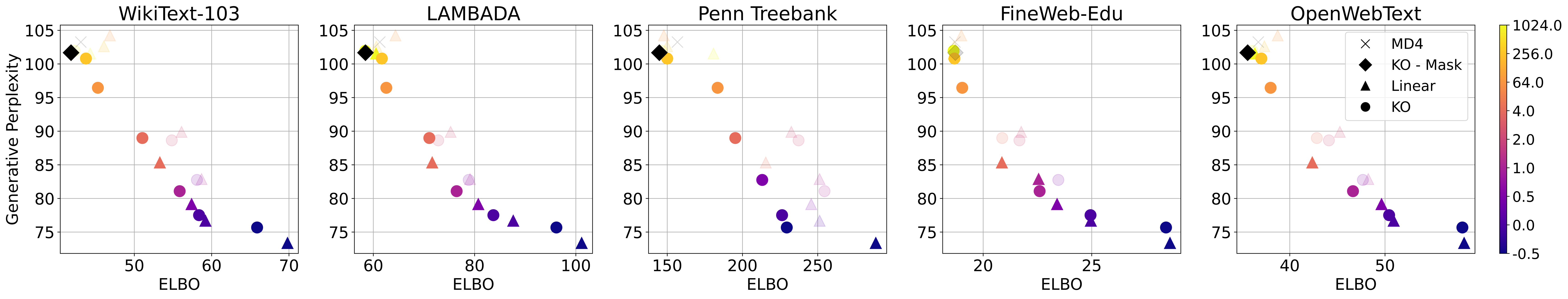}
    \caption{Generative perplexity vs. ELBO of kinetic optimal (KO) and linear schedulers of FineWeb-Edu models. The ELBO is evaluated: WikiText-103, LAMBADA, Penn TreeBank,FineWeb-Edu, and OpenWebText. \textbf{Bold} highlights the Pareto front. }\vspace{-5pt}
    \label{fig:fineweb_elbo_vs_gen_ppl}
\end{figure}

\begin{table}[t!]
  \small
  \caption{
    \rev{
      Zero-shot unconditional perplexity bound as in \eqref{eq:continuous_time_elbo} of Fineweb-Edu models, more details are in \Cref{a:text_gen}. $^*$ denotes our reimplementation of the method.
      }
  }
  \label{tab:elbo_comparison} \vspace{-5pt}
  \resizebox{\columnwidth}{!}{
    \rev{%
        \begin{tabular}{@{} lcccccccc @{}} 
            \toprule
            %
            %
            \textsc{Method}     
            & \textsc{Lambada}$\downarrow$  &\textsc{Wikitext2}$\downarrow$ &\textsc{PTB}$\downarrow$ &\textsc{Wikitext103}$\downarrow$ & \textsc{1BW}$\downarrow$ & \textsc{OpenWebText}$\downarrow$ & \textsc{Fineweb-Edu (train set)}$\downarrow$ \\
            \midrule 
            SEDD$^*$ (mask)~\citep{lou2023discrete} & $\le$58.57 & $\le$42.84 & \cellhi $\le$	\textbf{136.99} & $\le$42.88 & \cellhi $\le$\textbf{114.17} & $\le$36.55 & $\le$19.41\\
            MD4$^*$~\citep{shi2024simplified} & $\le$61.27 & $\le$43.08 & $\le$157.00 & $\le$43.02 & $\le$127.55 & \cellhi $\le$\textbf{35.57} & $\le$18.69\\
            \midrule
            DFM - Linear ($\beta_0=1024$) & $\le$60.59 & $\le$44.17 & $\le$180.75 & $\le$44.29 & $\le$147.21 & $\le$36.33 & $\le$18.67\\
            DFM - Kinetic Optimal (mask) & $\le$58.5 & \cellhi $\le$\textbf{41.80} & $\le$144.46 & \cellhi $\le$\textbf{41.83} & $\le$123.83 & \cellhi $\le$\textbf{35.57} & $\le$18.71\\
            DFM - Kinetic Optimal ($\beta_0=1024$) & \cellhi $\le$\textbf{58.41} & $\le$42.19 & $\le$147.09 & $\le$42.34 & $\le$115.51 & $\le$36.07 & \cellhi $\le$\textbf{18.63}\\
            \bottomrule
        \end{tabular}
    } 
}\vspace{-10pt}
\end{table}

\section{Experiments}\vspace{-5pt}

We evaluate Discrete Flow Matching (DFM) on multiple modalities: text, crystalline material, and image generation. Our main goal is to show that Discrete Flow Matching can outperform autoregressive models, and within the class of Discrete Flow Matching, we explore new additions such as the kinetic optimal and the metric-induced constructions. In text, we mainly explore the kinetic optimal probability paths (\eqref{e:ko_scheduler}) with different source distributions, as these have access to the closed-form ELBO (\ref{e:elbo_mixture_path}). 
In material generation, we find that enabling permutation invariance for DFM easily outperforms autoregressive models at de novo generation, achieving state-of-the-art results. 
Furthermore, in domains where a natural metric exists, we demonstrate our method's ability to inject inductive bias into the velocity and probability path using equations~\ref{eq:metric_prob_path} and~\ref{eq:metric_velocity}. We show that our large design space enables competitive results even with non-mask probability paths, showcasing the capabilities of our expanded design space.

\subsection{Text generation}

We explore our method on the task of text generation. We use the kinetic optimal probability path as in~\eqref{e:ko_scheduler}, which only has one hyper-parameter, the source distribution $p(x)$. For source distribution, we compute the statistics of tokens appearances in the training data $p_{\text{stats}}(x^i)$ and construct a single-parameter family of source distributions:
\begin{talign}
    p(x) = \prod_{i=1}^D p(x^i), \quad p(x^i) = \text{softmax}(-\beta_0\log p_{\text{stats}}(x^i)),
\end{talign}
where $\beta_0=-1$ recovers the data statistics, $\beta_0=0$ yield a uniform distribution on all tokens. Also, $\beta_0\too\infty$ yields a uniform distribution on the set of least probable tokens in the data, which behaves similarly to a mask source distribution.

For this experiment, we used linear and kinetic optimal schedulers with mask, $p(x)=\delta_{\dummy}(x)$, and $\beta_0\in\{-0.5, 0.0, 0.5, 1, 2, 4, 64, 256,1024\}$ source distributions. \rev{The models are trained on the FineWeb-Edu~\citep{lozhkov2024fineweb-edu} data set.
Table~\ref{tab:elbo_comparison} compares the evidence lower bound (ELBO), as in \eqref{e:elbo_mixture_path}, of our trained models with previous works. See \Cref{a:text_gen} for experimental setup. We find that the kinetic optimal scheduler yields the best results on most of the evaluation sets. Notably, to the best of our knowledge, this is the first time a non-mask source distribution obtains comparable results and sometimes outperforms the mask source distribution.} Figure~\ref{fig:fineweb_elbo_vs_gen_ppl} presents a view of the generative perplexity (as measured by GPT2-large) vs. the ELBO of each model. 
Generative perplexity represents the likelihood as determined by an external model, whereas the ELBO indicates the likelihood as assessed by the evaluated model. 
\rev{We see that models trained using the kinetic optimal scheduler achieve better tradeoffs than those trained with the linear scheduler, as they more frequently appear on the Pareto front.
}

\begin{figure}
    \centering
    \begin{tabular}{ccc}
         \begin{minipage}[b]{0.31\linewidth}
            \includegraphics[width=\linewidth]{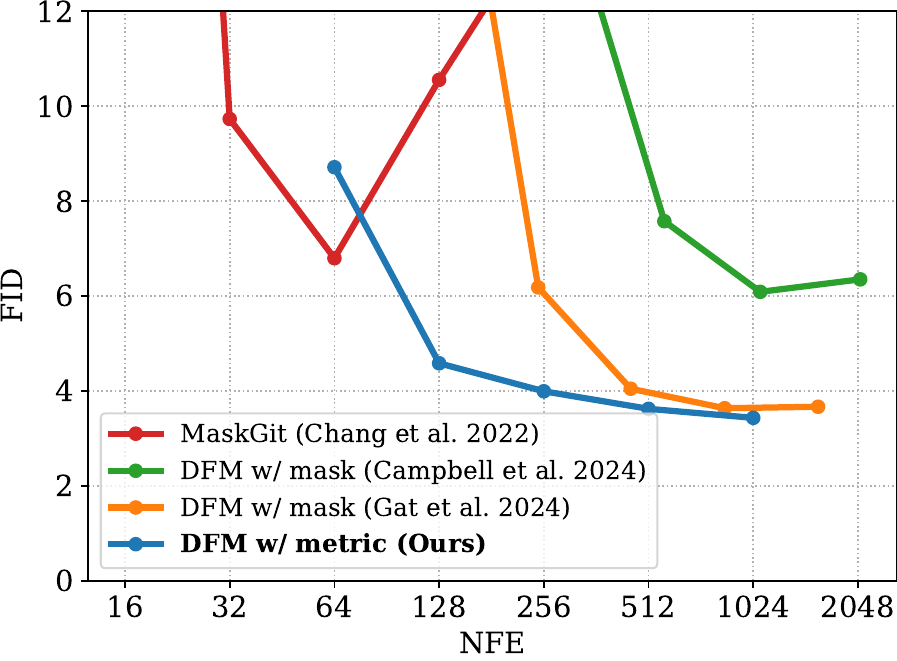} 
        \end{minipage} &
        \begin{minipage}[b]{0.31\linewidth}
            \includegraphics[width=\linewidth]{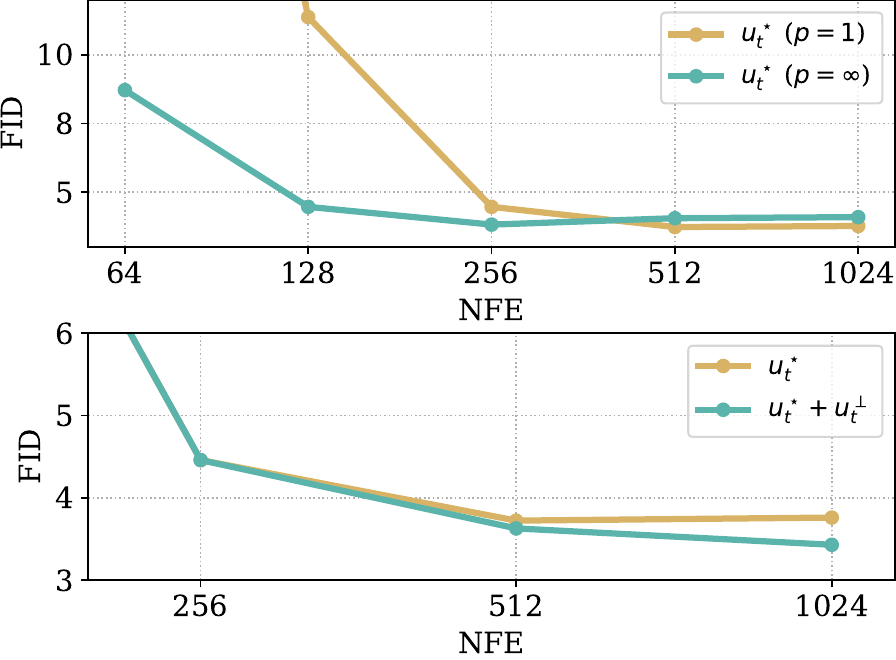} 
        \end{minipage} & 
        \begin{minipage}[b]{0.30\linewidth}
            \resizebox{\columnwidth}{!}{%
                \renewcommand{\arraystretch}{1.4}
                \begin{tabular}[b]{@{} l c }
                \toprule
                 Method & FID $\downarrow$ \\
                \midrule
                D3PM \citep{austin2021structured} & 7.34 \\
                CTDD \citep{nisonoff2024unlocking} & 7.86 \\
                $\tau$LDR-10 \citep{campbell2022continuous} & 3.74 \\
                DFM w/ mask \citep{gat2024discreteflowmatching} & 3.63 \\
                DFM w/ metric (Ours) & \cellhi \textbf{3.43} \\
                \bottomrule
                \end{tabular}
            }
            \vspace{0.2em}
        \end{minipage}
    \end{tabular}
    \caption{(\textit{left}) Increasing the design space of discrete probability paths and velocities allows us to perform better than prior works, while significantly boosting performance at the low NFE regime.
    (\textit{middle}) We find that the choice of kinetic optimal $u_t^\star$ significantly affects the low NFE regime while adding the probability-preserving component $u_t^\perp$ stabilizes the high NFE regime.
    (\textit{right}) Comparison of FID values for discrete generative models.}
    \label{fig:cifar10_fid_vs_nfe}
\end{figure}

\begin{table}
\centering
\caption{De novo material generation. Our primary metric, \textit{Stability Rate}, is the fraction of materials with energies below the convex hull formed by stable materials, following \citet{miller2024flowmm}.\vspace{-5pt}}
\resizebox{\textwidth}{!}{
\begin{tabular}{@{} lcccccccc}
\toprule
Method & NFE & \multicolumn{2}{c}{Validity (\%) $\uparrow$} & \multicolumn{2}{c}{Coverage (\%) $\uparrow$} & \multicolumn{2}{c}{Property $\downarrow$} & Stability Rate (\%) $\uparrow$ \\
&  &  Structural & Composition & Recall & Precision & wdist ($\rho$) & wdist ($N_{el}$) \\
\midrule
CDVAE \citep{xie2021crystal} & 5000 & 100.00 & 86.70 & 99.15 & 99.49 & 0.688 & 0.278 & 1.57 \\
DiffCSP \citep{jiao2023crystal} & 1000 & 100.00 & 83.25 & 99.71 & 99.76 & 0.350 & 0.125 & 5.06 \\
FlowMM \citep{miller2024flowmm} & 1000 & 96.85 & 83.19 & 99.49 & 99.58 & 0.239 & 0.083 & 4.65 \\
CrystalLLM (70B) \citep{gruver2024fine} & -- & {99.6} & 95.4 & 85.8 & 98.9 & 0.81 & 0.44 & 5.28 \\
\midrule
Autoregressive               & -- & 86.43	& 89.33	& 	63.31 & 99.74 & 0.088 & 0.030 & 1.99 \\
Perm. invariant DFM - Mask w/ Cubic  & 250 & 94.40 & 84.40 & 98.25 & 99.40 & 0.244 & 0.144 & 6.90 \\
Perm. invariant DFM - Mask w/ Kinetic Optimal (\ref{e:ko_scheduler}) & 250 & 95.79	& 88.50	& 90.11	& 99.29	& 0.542	& 0.154 & \cellhi \textbf{7.02} \\
\bottomrule
\end{tabular}}
\label{tab:material_generation}\vspace{-10pt}
\end{table}

\subsection{Crystalline material generation}


To showcase the flexibility of our approach, we use discrete Flow Matching to generate crystals. 
We train on inorganic materials from the MP-20 dataset, a subset of the Materials Project database \citep{jain2013commentary}.
Crystalline materials are represented using a combination of continuous and discrete variables, which we tokenize using the same method as \citet{gruver2024fine}, which fine-tunes a 70B LlaMa-2 autoregressive model \citep{touvron2023llama}. In contrast, we are the first to perform crystal generation with a purely discrete non-autoregressive model.

An important distinction is that since discrete Flow Matching directly predicts the factorized posterior (\ref{eq:factorized_posterior}), we can easily impose permutation invariance of the atoms, which should significantly reduce the complexity of the learning problem. 
This is as opposed to prior works on using autoregressive models for material generation \citep{flam2023language,gruver2024fine} which must impose an unnatural ordering on the variables. 
We show results in \Cref{tab:material_generation} where we achieve state-of-the-art results using discrete Flow Matching, in particular, with a kinetic optimal scheduler (\ref{e:ko_scheduler}). We believe non-autoregressive generation is a key ingredient in performing well due to the ability to impose structure such as permutation invariance. Compared to continuous-space models such as FlowMM \citep{miller2024flowmm} and DiffCSP \citep{jiao2023crystal}, we see a large performance gain in terms of our main metric, stability rate ($\geq 38$\% relative improvement), from using discrete generative models due to the discrete nature of crystal generation.

\begin{figure}
    \centering
    \rotatebox[origin=l]{90}{\footnotesize \; LlamaGen} 
    \hfill 
    \includegraphics[width=0.97\linewidth]{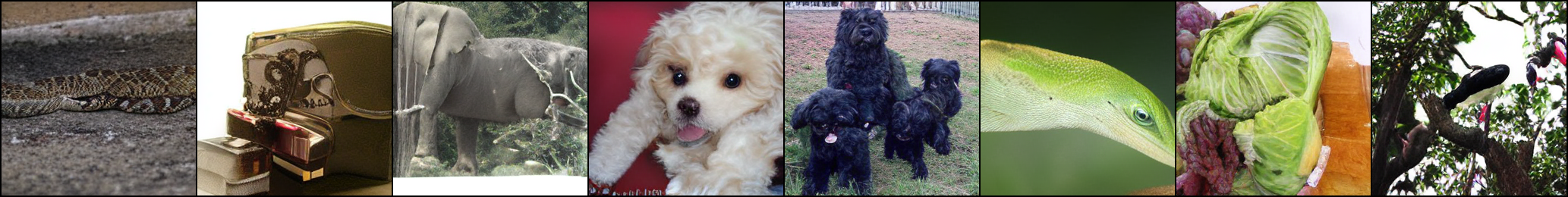}\\
    \rotatebox[origin=l]{90}{\footnotesize DFM metric} 
    \hfill 
    \includegraphics[width=0.97\linewidth]{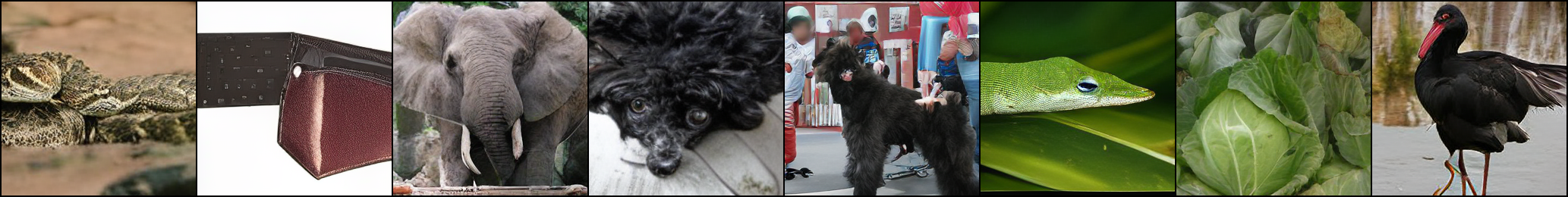}
    \caption{\rev{Generated samples for ImageNet 256$\times$256, with the same class label per column. (\textit{top}) Autoregressive LlamaGen model \citep{sun2024autoregressive}. (\textit{bottom}) Discrete Flow Matching with metric-induced probability path (\ref{eq:metric_prob_path}).} \vspace{-10pt}}
    \label{fig:image_samples}
\end{figure}

\subsection{Pixel space image generation} \vspace{-5pt}

We first consider the case of image generation in pixel space. Here, $\gT = \{0, \dots, 255\}$ and we have access to a natural choice of metric, by embedding $\gT$ on the interval $[-1, 1] \subset \R$ and using the Euclidean distance $\dist(x, y) = |x - y|$ in (\ref{eq:metric_prob_path}), as is typically done for continuous-space image generative models. We use the CIFAR-10 dataset \citep{krizhevsky2009learning} for these experiments. Results are shown in \Cref{fig:cifar10_fid_vs_nfe}, where we can improve upon the masked construction while also retaining performance at low number of function evaluations (NFE). Generated samples are shown in \Cref{fig:image_samples} and in \Cref{app:additional_results}. We find that optimizing the velocity after training \rev{can provide} significant gains: the choice of probability-advancing velocity (\Cref{app:power_inf_velocity}) affects the low NFE samples while the adding the probability-preserving component (\Cref{sec:corrector_velocity}) improves at high NFE.


\subsection{Discrete latent image generation}

\begin{wraptable}{r}{0.46\linewidth}
\centering
\vspace{-3em}
\caption{Face-blurred ImageNet-256 with the Llama-B architecture (111M parameters). \\ \rev{$^*$denotes our reimplementation.\looseness=-1}
}
\rev{
\resizebox{0.45\textwidth}{!}{
\begin{tabular}{@{} l cc }
\toprule
 Method & NFE & FID  \\
\midrule
LlamaGen (AR) \citep{sun2024autoregressive} & 256 & 5.46 \\
LlamaGen (AR)$^*$ \citep{sun2024autoregressive}  & 256 & 4.81 \\
DFM - Mask$^*$ \citep{gat2024discreteflowmatching} & 100 & 5.72   \\
DFM - Metric (Ours) & 100 & \cellhi \textbf{4.50}  \\
\bottomrule
\end{tabular}
} }
\label{tab:latent_imagenet}
\end{wraptable}
We also explore the use of discrete Flow Matching as a generative model within a discrete latent space learned by a vector quantized variational autoencoder (VQVAE;  \citet{van2017neural}). We use images from face-blurred ImageNet \citep{deng2009-imagenet,chrabaszcz2017downsampled} at 256$\times$256 resolution. For training the VQVAE model, we follow the setup in \citet{sun2024autoregressive} and use 16$\times$ downsampling to produce a latent space of dimension 16$\times$16 with a codebook size of $|\gT| = 2^{14} = 16384$.
As our choice of $\dist(\cdot, x_1)$, we use the same metric that was used to train the VQVAE model, which is $\dist(x, y) = \norm{ \nicefrac{x}{\norm{x}} - \nicefrac{y}{\norm{y}}}$. We show quantitative results in \Cref{tab:latent_imagenet}, where we find that \rev{discrete Flow Matching model with the metric probability path outperforms the autoregressive approach, while the masked construction lags behind. In addition, we show generated samples in \Cref{fig:image_samples} and \Cref{fig:imagenet_samples}, along with a visualization of the metric probability path in \Cref{fig:conditional_path} and ablation studies on NFE and CFG scale in \Cref{app:additional_results}.}\vspace{-5pt}

\section{Conclusion}\vspace{-5pt}

We have opened up the design space of discrete Flow Matching models based on the the continuous-time Markov chain generative process. In particular, we propose a kinetic optimal point of view for constructing velocities given prescribed probability paths. 
This leads to, for the first time, allowing arbitrary probability paths to be used. 
Furthermore, we justify mixture paths with particular schedulers as being kinetic optimal solutions, and showcase for the first time, competitive results for non-mask source distributions. 
Our method naturally encapsulates existing approaches, and we showcase the flexibility of our approach to designing discrete Flow Matching models across multiple application domains, ranging from text generation, to materials and image generation, where we see significant gains over autoregressive models.

\bibliography{iclr2025_conference}
\bibliographystyle{iclr2025_conference}

\appendix


\clearpage
\section{Always-valid sampling scheme}
\label{app:always_valid}

The second step of the sampling scheme defined in Section~\ref{ss:velocity_param_and_sampling} requires the condition $h \leq \frac{1}{\abs{u_t(z^i, z^i, x_1^i)}}$ to be a valid PMF, allowing only small step sizes.
To avoid this constraint on $h$, we use an alternative first-order sampling scheme. 
We replace step $2$ from Section~\ref{ss:velocity_param_and_sampling} with
\begin{enumerate}
    \item[2)] Sample $X_{t+h}^i \sim e^{-h\lambda(X_t^i | X_1^i)}\delta_{X^i_t}(\cdot) + (1-e^{-h\lambda(X_t^i | X_1^i)})\frac{u_t(\cdot,X^i_t|X_1^i)}{\lambda(X_t^i | X_1^i)} (1-\delta_{X^i_t}(\cdot))$,
\end{enumerate}
where $\lambda(X_t^i | X_1^i) = \abs{u_t(X^i_t, X^i_t | X_1^i)}$.

Interpreting this expression, $e^{-h\lambda(X_t^i | X_1^i)}$ is the probability the state does not change. If we do not change state, then we sample from $\delta_{X^i_t}(\cdot)$. If we do change state, then we sample from $\frac{u_t(\cdot,X^i_t|X_1^i)}{\lambda(X_t^i | X_1^i)} (1-\delta_{X^i_t}(\cdot)) $, which is a normalized distribution over all states not equal to $X_t^i$.

This is still a first-order sampling scheme, \ie it is $o(h)$ error from $\P(X_{t+h}^i\ \vert \  X_t^i)$.
However, unlike the simple Euler procedure, this alternative is always a valid PMF for any step size $h$. 

    \begin{algorithm}[h!]
    \caption{Euler Solver}
    \label{alg:alg_sampling}
    \begin{algorithmic}
    \REQUIRE{ model $\theta$, $x_0$, $h$ }
    \STATE   $t \gets 0$ 
    \STATE   $X_t \gets x_0$ 
    \WHILE{$t<1$}
        \FOR{i=0,...,D \COMMENT{{\color{cyan} in parallel}}}
            \item $X_1^i \sim p^{\theta, i}_{1|t}(\cdot|X_t)$
            \item $\lambda^i \gets \abs{u^i_t(X_t^i, X_t^i|X_1^i)}$
            \item  $Z^i_{\text{jump}} \sim U[0,1]$
            \IF{$Z^i_{\text{jump}}\le 1-e^{-h\lambda^i}$}
                \item $X_t^i\sim\frac{u^i_t(\cdot, X_t^i|X_1^i)}{\lambda^i}\parr{1-\delta_{X_t^i}(\cdot)}$
            \ENDIF
        \ENDFOR
    \item $t \gets t+h$
    \ENDWHILE
    \RETURN $X_t$
    \end{algorithmic}
    \end{algorithm}

\section{Symmetrized Kinetic Optimization problem}
\label{a:kinetic_optimal_proofs}

\begin{proposition}[Kinetic-optimal relaxation]\label{prop:flux_optimal}
Consider $p_t>0$ and assume $\frac{p_t(z)}{w_t(x,z)}=\frac{p_t(x)}{w_t(z,x)}$. Let $f_t$ be a solution to \eqref{e:relaxation_lambda}, which is unique up to a constant. Then, $j^\star_t$ in \eqref{e:optimal_j_given_p} is the unique solution to the Kinetic Optimality problem in \eqref{e:kinetic_optimal}.  
\end{proposition}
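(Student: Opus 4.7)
The plan is to recognize problem~(\ref{e:kinetic_optimal}) as a strictly convex quadratic program pointwise in $t$, extract its KKT system, identify the multiplier for the Continuity Equation (up to an overall factor of two) with $f_t$, and finally verify optimality by a direct variational argument. Because $p_t$ is prescribed and the integrand fully decouples across time, we only have to analyze, for each fixed $t$, the finite-dimensional problem $\min_{j\ge 0}\,\sum_{x\ne z}\tfrac{w_t(x,z)}{p_t(z)}j(x,z)^2$ subject to $\mathrm{div}_x(j) = -\dot p_t(x)$; the objective is a positive-definite quadratic form since $p_t>0$ and $w_t>0$, so a minimizer is unique if one exists.

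First I would form the Lagrangian
\[
\Ls \;=\; \sum_{x\ne z}\tfrac{w_t(x,z)}{p_t(z)}j(x,z)^2 \;+\; \sum_x f_t(x)\bigl[\mathrm{div}_x(j)+\dot p_t(x)\bigr] \;-\; \sum_{x\ne z}\mu_t(x,z)\,j(x,z)
\]
with $\mu_t\ge 0$, and read off stationarity $\tfrac{2w_t(x,z)}{p_t(z)}j_t^\star(x,z) = f_t(x)-f_t(z)+\mu_t(x,z)$. Combined with $j_t^\star\ge 0$, $\mu_t\ge 0$, and complementary slackness $\mu_t(x,z)\,j_t^\star(x,z)=0$, the only possibility is $j_t^\star(x,z) = \tfrac{p_t(z)}{2w_t(x,z)}[f_t(x)-f_t(z)]_+$. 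Rescaling $f_t$ by a factor of two recovers~(\ref{e:optimal_j_given_p}), and since the formula uses only differences $f_t(x)-f_t(z)$ it is insensitive to the additive constant under which $f_t$ is unique.

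Next I would verify primal feasibility by computing $\mathrm{div}_x(j_t^\star)$ and checking it equals $-\dot p_t(x)$. Pairing the contributions from $(x,z)$ and $(z,x)$ and invoking the symmetry $\tfrac{p_t(z)}{w_t(x,z)}=\tfrac{p_t(x)}{w_t(z,x)}$ lets the positive- and negative-part ReLUs telescope into the signed difference $f_t(x)-f_t(z)$, so that the Continuity Equation collapses exactly to the Laplacian-form relation~(\ref{e:relaxation_lambda}), which $f_t$ is assumed to satisfy. For global optimality and uniqueness, I would then use the direct variational identity
\[
E(j)-E(j_t^\star) \;=\; \sum_{x\ne z}\tfrac{w_t(x,z)}{p_t(z)}(j-j_t^\star)^2 \;+\; 2\sum_{x\ne z}\tfrac{w_t(x,z)}{p_t(z)}\,j_t^\star\,(j-j_t^\star),
\]
valid for any feasible $j$. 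Substituting stationarity into the cross term splits it into an $(f_t(x)-f_t(z))$-piece that vanishes because $j$ and $j_t^\star$ share the same divergence, and a $\mu_t$-piece equal to $\sum \mu_t\,j\ge 0$ by complementary slackness on $j_t^\star$ together with non-negativity of $j$ and $\mu_t$. Hence $E(j)\ge E(j_t^\star)$, with equality only when the strictly convex quadratic term vanishes, i.e.\ $j=j_t^\star$.

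The step I expect to be the genuine content is the symmetric telescoping in the feasibility check: this is the one place where the weighted symmetry hypothesis is actually used, and it is what converts an apparently nonlinear condition on $j_t^\star$ (through the ReLU) into the linear system~(\ref{e:relaxation_lambda}) on $f_t$. Everything else is standard convex-duality bookkeeping, so the remaining care is mostly cosmetic---handling boundary cases such as ties $f_t(x)=f_t(z)$ (which are irrelevant since they give $j_t^\star(x,z)=0$) and tracking the harmless factor-of-two that relates the raw KKT multiplier to the function denoted $f_t$ in~(\ref{e:relaxation_lambda}).
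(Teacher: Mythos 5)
Your proposal follows essentially the same path as the paper's proof: both set up the KKT system with the ReLU flux $j_t^\star$ and the complementary multiplier $\mu_t$, both verify primal feasibility by the symmetric telescoping $[a]_+ - [-a]_+ = a$ (the one place the weighted symmetry is used), and both conclude by strict convexity. The only cosmetic difference is that you certify global optimality via the explicit completion-of-the-square identity rather than citing the standard KKT-sufficiency theorem, and you take the ``$f_t$ unique up to a constant'' clause as given rather than deriving it from the positive-definiteness of the associated quadratic form as the paper does.
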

\begin{proof}
Equation \ref{e:relaxation_lambda} is a linear system of equations with $|\gT|$ equations and $|\gT|$ variables, that has the form of a \emph{discrete Poisson equation} (\ie, the discrete analog to $\Delta f_t = \dot{p}_t$). 

As $\rho_t(x,z)\defe \frac{p_t(z)}{w_t(x,z)}>0$ there exists a unique solution to this system up to a constant. Indeed, the quadratic form associated with the linear part of \eqref{e:relaxation_lambda} is 
\begin{equation}
    \frac{1}{2}\sum_{x,y} \rho_t(x,y)\brac{ f_t(x) - f_t(y) }^2.
\end{equation}
And since $\rho_t(x,y)>0$ the only function in the kernel is constant. Let $\lambda_t$ be a particular solution to \eqref{e:relaxation_lambda} and define
\begin{align}\label{ea:J_in_proof}
    j_t(x,y) &= \rho_t(x,z)\brac{\lambda_t(x)-\lambda_t(y)}_+  &\forall x\ne y\\ \label{ea:mu_in_proof}
    \mu_t(x,y) &= \begin{cases}
    0 & \text{ if } j_t(x,y)>0 \\
    \lambda_t(y)-\lambda_t(x)& \text{ if } j_t(x,y)=0 
    \end{cases} & \forall x \ne y
\end{align}
Now consider the optimization problem in \eqref{e:kinetic_optimal}. Since $\rho_t(x,z)>0$ it is strictly convex and therefore has at most a single solution. Furthermore, if the constraint set is non-empty then the KKT conditions are necessary and sufficient for a solution. Note that $j_t$ defined in \eqref{ea:J_in_proof} satisfies the constraints in \eqref{e:kinetic_optimal}. Indeed, it is clearly non-negative, and 
\begin{align}
    \divv_x j_t &= \sum_{z\ne x} j_t(z,x) - \sum_{z\ne x} j_t(x,z) \\ &= \sum_{z\ne x} \rho_t(x,z)\brac{\lambda_t(z)-\lambda_t(x)}_+ - \rho_t(x,z)\brac{\lambda_t(x)-\lambda_t(z)}_+ \\
    &= \sum_{z\ne x} \rho_t(x,z) \brac{\lambda_t(z)-\lambda_t(x)}\\
    &\overset{(\ref{e:relaxation_lambda})}{=} -\dot{p}_t(x)
\end{align}
Therefore the optimization problem in \eqref{e:kinetic_optimal} is feasible. This in particular means that the KKT conditions are both necessary and sufficient. For each $t\in[0,1]$ we denote the dual variables $\lambda_t:[d]\too\Real$ and $\mu_t:[d]\times[d]\too \Real$, and the KKT equations take the form: 
    \begin{subequations}\label{e:kkt}
    \begin{align} \label{e:kkt_stationary}
        \frac{j_t(x,y)}{\rho_t(x,y)} + \lambda_t(y) -\lambda_t(x) &= \mu_t(x,y) & \forall x\ne y &\quad \blacktriangleright \text{stationary} \\ \label{e:kkt_primal_div_eq_0}
        \sum_y \brac{j_t(y,x)-j_t(x,y)} &= -\dot{p}_t(x) & \forall x & \quad\blacktriangleright \text{primal feasibility} \\ \label{e:kkt_primal_J_geq_0}
        j_t(x,y) & \geq 0 & \forall x\ne y &\quad\blacktriangleright \text{primal feasibility}\\ \label{e:kkt_dual_mu_geq_0}
        \mu_t(x,y) &\geq 0 & \forall x\ne y &\quad \blacktriangleright \text{dual feasibility}\\ \label{e:kkt_slackness}
        \mu_t(x,y)j_t(x,y)&=0 &\forall x \ne y&\quad  \blacktriangleright \text{complementary slackness}
    \end{align}    
    \end{subequations}    
Now one can verify that $j_t$ and $\mu_t$ defined in \eqref{ea:J_in_proof} and \eqref{ea:mu_in_proof} (respectively) solve the KKT. The primal feasibility is already checked above. Let us check the stationary condition:
\begin{align}
    \frac{j_t(x,y)}{\rho_t(x,y)}+\lambda_t(y)-\lambda_t(x) &= \begin{cases}
        0 & \lambda_t(x)-\lambda_t(y)> 0 \\
        \lambda_t(y)-\lambda_t(x) & \lambda_t(x)-\lambda_t(y) \leq 0
    \end{cases}\\
    &= \begin{cases}
        0 & j_t(x,y)> 0 \\
        \lambda_t(y)-\lambda_t(x) & j_t(x,y)=0
    \end{cases}\\
    &\overset{(\ref{ea:mu_in_proof})}{=} \mu_t(x,y) 
\end{align}
Lastly, dual feasibility and complementary slackness hold by definition in equations \ref{ea:J_in_proof} and \ref{ea:mu_in_proof}.
    
\end{proof}


\begin{proposition}[Kinetic Optimal paths.]
    For $p_t>0$ and the choice of $w_t(x,z)=\frac{1}{p_t(x)}$, the solution to the Kinetic Optimality problem in \eqref{e:kinetic_optimal} is equivalent to problem \ref{e:sphere_kinetic_optimal}.
\end{proposition}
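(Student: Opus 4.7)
The plan is to reduce the joint optimization over $(p_t,j_t)$ in (\ref{e:kinetic_optimal}) to an optimization over $p_t$ alone, by first minimizing over the flux with $p_t$ held fixed (via Proposition~\ref{prop:flux_optimal}) and then rewriting the resulting functional in terms of $a_t=\sqrt{p_t}$.

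\textbf{Step 1 (optimal flux at fixed $p_t$).} Apply Proposition~\ref{prop:flux_optimal} with $w_t(x,z)=1/p_t(x)$; the required symmetry $p_t(z)/w_t(x,z)=p_t(x)/w_t(z,x)$ reduces to $p_t(x)p_t(z)=p_t(z)p_t(x)$ and is automatic. Equation~(\ref{e:relaxation_lambda}) then becomes
\begin{equation*}
    p_t(x)\bigl(f_t(x) - \textstyle\sum_z p_t(z)f_t(z)\bigr) = \dot{p}_t(x),
\end{equation*}
whose solution (using $p_t>0$) is $f_t(x)=\dot{p}_t(x)/p_t(x)$, unique up to an additive constant that cancels in differences. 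The optimal flux from (\ref{e:optimal_j_given_p}) is therefore $j_t^\star(x,z)=p_t(x)p_t(z)[f_t(x)-f_t(z)]_+$.

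\textbf{Step 2 (closed form for the minimum energy at fixed $p_t$).} Plugging $j_t^\star$ into the integrand of (\ref{e:kinetic_optimal_energy}) and cancelling $1/(p_t(x)p_t(z))$ against one factor of $(p_t(x)p_t(z))^2$ gives $\sum_{x\ne z} p_t(x)p_t(z)[f_t(x)-f_t(z)]_+^2$. Pairing each ordered pair $(x,z)$ with its reverse $(z,x)$ and using the elementary identity $[s]_+^2+[-s]_+^2=s^2$ collapses this to $\tfrac12\sum_{x\ne z}p_t(x)p_t(z)(f_t(x)-f_t(z))^2$, which expands to $\Var_{p_t}(f_t)$. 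Substituting $f_t=\dot{p}_t/p_t$ and using $\sum_x\dot{p}_t(x)=0$ (differentiate $\sum_x p_t(x)=1$) yields the minimum energy $\sum_x \dot{p}_t(x)^2/p_t(x)$.

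\textbf{Step 3 (sphere form).} With $a_t(x)=\sqrt{p_t(x)}$, $\dot{a}_t(x)=\dot{p}_t(x)/(2\sqrt{p_t(x)})$, so the minimum integrand equals $4\sum_x \dot{a}_t(x)^2$. The normalization $\sum_x p_t(x)=1$ becomes the hypersphere constraint (\ref{e:sphere_contraints}), and the boundary data $p_0=p,\,p_1=q$ transfer to $a_0=\sqrt{p},\,a_1=\sqrt{q}$. Since the factor $4$ does not affect minimizers, the reduced problem over $p_t$ is exactly (\ref{e:sphere_kinetic_optimal}); the non-negativity constraint (\ref{e:general_positive}) is automatically enforced by the ReLU in $j_t^\star$.

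The step requiring the most care is Step 2, since the ReLU in $j_t^\star$ is asymmetric: one must verify that the symmetrization via $[s]_+^2+[-s]_+^2=s^2$ cleanly turns the kinetic energy into the variance $\Var_{p_t}(f_t)$, which in turn becomes the Fisher-information-like quantity $\sum_x \dot p_t(x)^2/p_t(x)$ that is exactly four times the squared speed $\sum_x \dot a_t(x)^2$ of the curve $t\mapsto a_t$ on the sphere.
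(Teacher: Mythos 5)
Your proof follows essentially the same route as the paper's: fix $p_t$, obtain the optimal flux $j_t^\star(x,z)=p_t(x)p_t(z)[f_t(x)-f_t(z)]_+$ with $f_t=\dot p_t/p_t$ from the Laplacian relaxation, collapse the energy to $\sum_x \dot p_t(x)^2/p_t(x)$ using the pairing identity $[s]_+^2+[-s]_+^2=s^2$ together with $\sum_x\dot p_t(x)=0$, and substitute $a_t=\sqrt{p_t}$. One minor point: your constant $4$ in front of $\sum_x\dot a_t(x)^2$ is the correct one (the paper's displayed derivation has a slip writing $2$), though this scalar does not affect the minimizers and so is immaterial to the equivalence.
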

\begin{proof}
    According to \eqref{e:optimal_j_given_p} the optimal flux in this case takes the form $j_t^*(x,z) = p_t(x)p_t(x)\brac{\frac{\dot{p}_t(x)}{p_t(x)}-\frac{\dot{p}_t(z)}{p_t(z)}}_+$. Plugging this in problem \ref{e:kinetic_optimal} we get the energy 
    \begin{align*}
        \sum_{x,z} p_t(x)p_t(z)\parr{\frac{\dot{p}_t(x)}{p_t(x)}-\frac{\dot{p}_t(z)}{p_t(z)}}_+^2 &= \frac{1}{2} \sum_{x,z} p_t(x)p_t(z)\parr{\frac{\dot{p}_t(x)}{p_t(x)}-\frac{\dot{p}_t(z)}{p_t(z)}}^2\\
        &= \sum_{x} p_t(x) \parr{\frac{\dot{p}_t(x)}{p_t(x)}}^2 - \parr{\sum_{x} p_t(x)\frac{\dot{p}_t(x)}{p_t(x)}}^2 \\
        &= \sum_{x} \parr{\frac{\dot{p}_t(x)}{\sqrt{p_t(x)}}}^2\\
        &= 2\sum_x \parr{\frac{d}{dt}\sqrt{p}_t(x)}^2
    \end{align*}
    where in the previous to last equality we used the fact that $\sum_x \dot{p}_t(x) = \frac{d}{dt}\sum_x p_t(x) = 0$. We are left with the following optimization problem:
    \begin{subequations}
\begin{talign}
    \min_{p_t} & \quad \int_0^1 \sum_x \parr{\frac{d}{dt}\sqrt{p}_t(x)}^2 \\ 
    &\quad p_t(x)>0 
    \\ 
    &\quad \sum_x p_t(x) = 1 \\ 
    &\quad p_0=p,\quad  p_1=q 
\end{talign}
\end{subequations}
Making the change of variables $a_t(x)=\sqrt{p_t(x)}$, we get the form in \eqref{e:sphere_kinetic_optimal}, as desired.
\end{proof}

\paragraph{The case of $q=\delta_{x_1}$.} Given that $q=\delta_{x_1}$ we get that the Kinetic Optimal solution \eqref{e:ko_pt_and_ut} takes a form of a mixture path (\eqref{e:mixture_token_dependent}) with the scheduler specified in \eqref{e:ko_scheduler}. Specifically we show that the probability path is 

\begin{align}\label{e:ko_path}
    p_t(x|x_1)&=a^2_t(x) = \frac{\sin^2{(1-t)\Omega}}{\sin^2\Omega}p(x)  + \parr{1 -\frac{\sin^2{(1-t)\Omega}}{\sin^2\Omega} }\delta_{x_1}(x),
\end{align}
and the velocity is
\begin{equation}\label{e:ko_velocity}
    u_t(x, z|x_1) = \frac{2\Omega}{\tan{(1-t)\Omega}}\parr{\delta_{x_1}(x) -\delta_{z}(x)}.
\end{equation}
We start by substituting the $q\equiv\delta_{x_1}$ in \eqref{e:ko_solution},
\begin{align}
    a_t(x) = \frac{\sin (1-t)\Omega}{\sin \Omega}\sqrt{p(x)} + \frac{\sin t\Omega}{\sin \Omega}\delta_{x_1}(x).
\end{align}
Hence the probability path is
\begin{align}
    p_t(x|x_1) &= \parr{\frac{\sin (1-t)\Omega}{\sin \Omega}\sqrt{p(x)} + \frac{\sin t\Omega}{\sin \Omega}\delta_{x_1}(x)}^2\\
    &= \frac{\sin^2 (1-t)\Omega}{\sin^2 \Omega}p(x) +\parr{\frac{2\sqrt{p(x_1)}\sin (1-t)\Omega\sin t\Omega}{\sin^2\Omega}+\frac{\sin^2 t\Omega}{\sin^2 \Omega}}\delta_{x_1}(x)\\
    &= \frac{\sin^2 (1-t)\Omega}{\sin^2 \Omega}p(x) +\parr{\frac{2\cos\Omega\sin (1-t)\Omega\sin t\Omega}{\sin^2\Omega}+\frac{\sin^2\parr{\Omega-(1-t)\Omega } }{\sin^2 \Omega}}\delta_{x_1}(x)\\
    &= \frac{\sin^2 (1-t)\Omega}{\sin^2 \Omega}p(x) +\biggr(\frac{2\cos\Omega\sin (1-t)\Omega\sin\parr{\Omega - (1-t)\Omega}}{\sin^2\Omega}\\
    &\qquad\qquad\qquad\qquad\qquad\quad+\frac{\parr{\sin\Omega\cos(1-t)\Omega -\cos\Omega\sin(1-t)\Omega}^2}{\sin^2\Omega}\biggr)\delta_{x_1}(x)\\
    &= \frac{\sin^2 (1-t)\Omega}{\sin^2 \Omega}p(x) +\biggr(\frac{-\cos^2\Omega\sin^2 (1-t)\Omega}{\sin^2\Omega} + \frac{\sin^2\Omega\cos^2(1-t)\Omega}{\sin^2\Omega}\biggr)\delta_{x_1}(x)\\
    &= \frac{\sin^2 (1-t)\Omega}{\sin^2 \Omega}p(x) +\biggr(\frac{-(1-\sin^2\Omega)\sin^2 (1-t)\Omega}{\sin^2\Omega}\\
    &\qquad\qquad\qquad\qquad\qquad\quad+\frac{\sin^2\Omega(1-\sin^2(1-t)\Omega)}{\sin^2\Omega}\biggr)\delta_{x_1}(x)\\
    &= \frac{\sin^2 (1-t)\Omega}{\sin^2 \Omega}p(x) +\biggr(1-\frac{\sin^2 (1-t)\Omega}{\sin^2\Omega}\biggr)\delta_{x_1}(x),
\end{align}
where in the third equality we used $\Omega = \arccos \sqrt{p(x_1)}$. Substituting 
\begin{equation}
    \kappa_t(x_1) = 1-\frac{\sin^2 (1-t)\Omega}{\sin^2\Omega}
\end{equation}
in \eqref{eq:neta_velocity_mixture} yields the desired velocity as in \eqref{e:ko_velocity}.



\section{Closed-form kinetic optimal velocities} \label{app:other_KO_velocities}

\subsection{Kinetic optimal velocities for mixture paths}
\label{app:ko_velocity_for_mixture}
We examine the velocities from the kinetic optimal fluxes given in~(\ref{eq:Campbell_flux}) and~(\ref{eq:Neta_flux}) under mixture paths~(\ref{e:mixture_token_dependent}).  We show~(\ref{eq:Campbell_flux}) and~(\ref{eq:Neta_flux}) produce the same velocity for the uniform mixture for which $p(x) = 1/|\gT|$, and different velocities for non-uniform mixtures.  We also demonstrate our kinetic optimal velocity from~(\ref{eq:Neta_flux}) is the velocity proposed by~\cite{gat2024discreteflowmatching} for any mixture path.


\paragraph{Positive mixture paths using (\ref{eq:Campbell_flux}):} For~(\ref{eq:Campbell_flux}), we only consider mixture paths for which $p_t(x | x_1) > 0$ for all $x \in \gT$, including uniform $p(x) = 1/|\gT|$ as a special case.  For these mixture paths, where we recall that $x \neq z$,  we have
\begin{align}
u^\star_t(x, z | x_1) &= \frac{\left[ \partial_t p_t(x | x_1) - \partial_t p_t(z | x_1) \right]_+}{|\gT|p_t(z | x_1)} \nonumber\\
&= \frac{\dot{\kappa}_t(x_1)\left[ \delta_{x_1}(x) - \delta_{x_1}(z) + p(z) - p(x)\right]_{+}}{|\gT|p_t(z | x_1)}
\end{align}
We now examine the uniform and arbitrary $p(x) > 0$ cases separately.

\paragraph{Uniform mixture using (\ref{eq:Campbell_flux}):}  For uniform, we have for $x \neq z$
\begin{align}
u^\star_t(x, z | x_1) &= \frac{\dot{\kappa}_t(x_1)\left[ \delta_{x_1}(x) - \delta_{x_1}(z) + p(z) - p(x)\right]_{+}}{|\gT|p_t(z | x_1)} \nonumber\\
&= \frac{\dot{\kappa}_t(x_1)\left[ \delta_{x_1}(x) - \delta_{x_1}(z)\right]_{+}}{|\gT|p_t(z | x_1)}
\end{align}
This is only positive if $x = x_1$ and $z \neq x_1$.  In which case, we have
\begin{align}
u^\star_t(x_1, z \neq x_1 | x_1) &= \frac{\dot{\kappa}_t(x_1)\left[ 1 - 0\right]_{+}}{1-\kappa_t(x_1)} \nonumber\\
&= \frac{\dot{\kappa}_t(x_1)}{1-\kappa_t(x_1)}
\end{align}
So in total we have
\begin{align}
u^{*}_t(x, z | x_1) = \frac{\dot{\kappa_t}(x_1)}{1-\kappa_t(x_1)}(\delta_{x_1}(x) - \delta_{z}(x)) 
\end{align}

\paragraph{Arbitrary $p(x) > 0$ using (\ref{eq:Campbell_flux}):}  For a non-uniform positive $p(x)$ we do not arrive at the same velocity as the uniform mixture.  Consider $x \neq z$, $x\neq x_1$, and $z \neq x_1$, then
\begin{align}
u^\star_t(x \neq x_1, z \neq x_1 | x_1) &= \frac{\dot{\kappa}_t(x_1)\left[p(z) - p(x)\right]_{+}}{|\gT|p_t(z | x_1)}.
\end{align}
This is not zero if $p(z) > p(x)$ for any pair of $z$ and $x$, proving this is a different velocity in general.

\paragraph{Arbitrary mixture paths using (\ref{eq:Neta_flux}):}
Substituting in the mixture path, where we recall that $x \neq z$ and $p_t(z | x_1) > 0$, we have
\begin{align}
    u^\star_t(x, z | x_1) &= \frac{1}{p_t(z | x_1)} \left[ \partial_t p_t(x | x_1)p_t(z|x_1) - \partial_t p_t(z | x_1) p_t(x|x_1) \right]_+ \nonumber\\
    &= \dot{\kappa_t}(x_1) \left[\delta_{x_1}(x) - p(x) - \frac{p_t(x|x_1)}{p_t(z | x_1)}(\delta_{x_1}(z) - p(z))\right]_+
\end{align}
We consider several cases.  First, $x \neq x_1$ and $z = x_1$, then the term in brackets is negative and hence $u_t^*=0$.  Second if $x \neq x_1$ and $z \neq x_1$, we have
\begin{align}
u^\star_t(x \neq x_1, z \neq x_1 | x_1)     &= \dot{\kappa_t}(x_1)\left[- p(x) + \frac{p_t(x | x_1)p(z)}{p_t(z | x_1)}\right]_+ 
\nonumber\\
&= \dot{\kappa_t}(x_1)\left[- p(x) + \frac{(1-\kappa_t(x_1))p(x) p(z)}{(1-\kappa_t(x_1))p(z)}\right]_+ 
\nonumber\\
&=0.
\end{align}
Our final case, $x = x_1$ and $z \neq x_1$, gives
\begin{align}
    u^\star_t(x_1, z \neq x_1 | x_1)
    &= \dot{\kappa_t}(x_1)\left[1 - p(x_1) + \frac{\left((1-\kappa_t(x_1))p(x_1) + \kappa_t(x_1)\right)p(z)}{(1-\kappa_t(x_1))p(z)}\right]_+ \nonumber\\
    &= \frac{\dot{\kappa_t}(x_1)}{1-\kappa_t(x_1)} \left[(1-\kappa_t(x_1))(1 - p(x_1)) + (1-\kappa_t(x_1))p(x_1) + \kappa_t(x_1)\right]_+ \nonumber\\
    &= \frac{\dot{\kappa_t}(x_1)}{1-\kappa_t(x_1)}
\end{align}
So in total for any mixture path we have
\begin{align}
u^{*}_t(x, z | x_1) = \frac{\dot{\kappa_t}(x_1)}{1-\kappa_t(x_1)}(\delta_{x_1}(x) - \delta_{z}(x)),
\label{eq:neta_velocity_mixture}
\end{align}
recovering the velocity proposed in~\cite{gat2024discreteflowmatching}.

\subsection{Marginal velocity in closed-form for mixture paths}
\label{app:marginal_ut_mixture_path}

As shown in Appendix~\ref{app:ko_velocity_for_mixture}, the kinetic optimal flux given by (\ref{eq:Neta_flux}) results in kinetic optimal velocity (\ref{eq:neta_velocity_mixture}) for mixture paths.  To derive the marginal velocity, we insert (\ref{eq:neta_velocity_mixture}) into (\ref{e:u_t_marginalization}) as follows

\begin{align}
    u_t^i(x^i,z) &= \sum_{x_1^i\in \gT} u_t(x^i,z^i|x_1^i)p^i_{1|t}(x_1^i|z) \nonumber\\
    &= \sum_{x_1^i\in \gT} \frac{\dot{\kappa_t}(x^i_1)}{1-\kappa_t(x^i_1)}(\delta_{x^i_1}(x^i) - \delta_{z^i}(x^i))p^i_{1|t}(x_1^i|z) \nonumber\\
    &= \frac{\dot{\kappa}_t(x^i)}{1-\kappa_t(x^i)}p^i_{1|t}(x^i|z) - \delta_{z^i}(x^i)\sum_{x_1^i\in \gT}  \frac{\dot{\kappa}_t(x_1^i)}{1-\kappa_t(x_1^i)} p^i_{1|t}(x_1^i|z).
\label{eq:mixture_u}
\end{align} 

\subsection{Power $\infty$ velocity for general paths}\label{app:power_inf_velocity}
We begin by defining a single parameter family of kinetic optimal velocities. For every $\alpha>1$ the flux as in \eqref{e:optimal_j_given_p} for $\tau_t(x)=p_t^{\alpha}(x)$ is 
\begin{equation}
    j_t^\star(x,z)=p_t^{\alpha}(x)p_t^{\alpha}(z)\brac{f_t(x)-f_t(z)}_+,\quad f_t(x) = \frac{1}{\sum_{s \in \gT}p_t^{\alpha}(s)}\frac{\dot{p}_t(x)}{p_t^{\alpha}(x)}.
\end{equation}
Further simplifying $j_t^\star(x,z)$,
\begin{align}
j_t^\star(x,z)&=\brac{\dot{p}_t(x)\frac{p_t^{\alpha}(z)}{\sum_{s \in \gT}p_t^{\alpha}(s)} - \dot{p}_t(z)\frac{p_t^{\alpha}(x)}{\sum_{s \in \gT}p_t^{\alpha}(s)}}_+.
\end{align}
An interesting case of the flux above is taking the limit $\alpha\too\infty$, where
\begin{equation}
    \frac{p_t^{\alpha}(x)}{\sum_{s \in \gT}p_t^{\alpha}(s)}\xrightarrow[\alpha\too\infty]{} \delta_{\argmax_s(p_t(s))}(x),
\end{equation}
and the flux is 
\begin{equation}\label{e:power_inf_flux}
j_t^\star(x,z)=\brac{\dot{p}_t(x)\delta_{\argmax_s(p_t(s))}(z) - \dot{p}_t(z)\delta_{\argmax_s(p_t(s))}(x)}_+.
\end{equation}
Indeed the above flux satisfy the Continuity Equation and the Rate Conditions as in Indeed the above flux satisfy the Continuity Equation and the Rate Conditions as in \eqref{e:non_negative_flux}. Note that it can also be seen that 
\begin{equation}
    j_t^\star(x,z)\xrightarrow[p_t(z)\too0]{} 0.
\end{equation}

\section{Evidence lower bound (ELBO) for CTMC}
\label{app:elbo}

Let $0 = t_0 < t_1 < \cdots < t_K = 1$ be a uniform discretization of the interval $[0, 1]$ with $h = t_{k+1} - t_k = \frac{1}{K}$. Also let $q_{k+1|k}(x^i | z^i, x_1^i) = \delta_{z^i}(x^i) + h u_t(x^i, z^i | x_1^i)$ be the Euler discretization of the variational process, and let $p_{k+1|k}(x^i | z^i) = \delta_{z^i}(x^i) + h u_t^i(x^i, z)$ be the Euler discretization of the learned process, with both starting at the same source distribution $q_0(x^i | x_1^i) = p(x^i)$. We also assume the model $p(x_1^i | x_{0:K}^i) = \delta_{x_K^i}(x_1^i)$. The discrete-time ELBO is then
\begin{align}
    \log p_\theta(x_1) &\geq \E_{x_{0:K} \sim q_{0:K}(\cdot | x_1)} \left[ \log p(x_1 | x_{0:K}) + \log p_{0:K}(x_{0:K}) - \log q_{0:K}(x_{0:K} | x_1)\right] \\
    &= \E_{x_{1:K} \sim q_{1:K}(\cdot | x_1)} \sum_{i=1}^D \left[ \log \delta_{x_K^i}(x_1^i) - \sum_{k=0}^{K-1} D_\text{KL}(q_{k+1|k}(x_{k+1}^i| x_k, x_1^i) \| p_{k+1|k}(x_{k+1}^i | x_k)) \right] \\
    &\qquad - \cancel{\sum_{i=1}^D D_\text{KL}(q_0(x^i | x_1^i) \| p(x^i))} 
\end{align}
Each term in the summation:
\begin{align}
    &D_\text{KL}(q_{k+1|k}(x^i | z, x_1^i) \| p_{k+1|k}(x^i | z)) \\
    &= \sum_{x^i} q_{k+1|k}(x^i | z, x_1^i) \log \frac{q_{k+1|k}(x^i | z, x_1^i)}{p_{k+1|k}(x^i | z)} \\
    &= \sum_{x^i} \left[ \delta_{z^i}(x^i) + hu_t^i(x^i, z^i | x_1^i) \right] \log \frac{\delta_{z^i}(x^i) + hu_t^i(x^i, z^i | x_1)}{\delta_{z^i}(x^i) + hu_t^i(x^i, z)} \\
    &= \left[ 1 + hu_t(z^i, z^i | x_1^i) \right] \log \frac{1 + hu_t^i(z^i, z | x_1^i)}{1 + hu_t^i(z^i, z)} + h\sum_{x^i \neq z^i} \left[ u_t(x^i, z^i | x_1^i) \right] \log \frac{u_t^i(x^i, z^i | x_1^i)}{u_t^i(x^i, z)} 
\end{align}
Taylor series expansion around $h=0$:
\begin{equation}
    \log (1 + h u_t^i) = hu_t^i + o(h)
\end{equation}
So we can simplify
\begin{align}
    &D_\text{KL}(q_{k+1|k}(x^i | z, x_1^i) \| p_{k+1|k}(x^i | z)) \\
    &= \left[ 1 + hu_t^i(z^i, z^i | x_1^i) \right] (hu_t^i(z^i, z^i | x_1^i) - hu_t^i(z^i, z)) + h\sum_{x^i \neq z^i} \left[ u_t^i(x^i, z^i | x_1^i) \right] \log \frac{u_t^i(x^i, z^i | x_1^i)}{u_t^i(x^i, z)} + o(h) \\
    &= h \left( u_t^i(z^i, z^i | x_1^i) - u_t^i(z^i, z) + \sum_{x^i \neq z^i} \left[ u_t^i(x^i, z^i | x_1^i) \right] \log \frac{u_t^i(x^i, z^i | x_1^i)}{u_t^i(x^i, z)} \right) + o(h) 
\end{align}
Taking limit as $K \rightarrow \infty$, hence $h = \frac{1}{K} \rightarrow 0$, and asserting that $q(x_K^i | x_1^i) = \delta_{x_1^i}(x_K^i)$ in this continuous-time limit, we obtain the ELBO:
\graybox{
\begin{align}
    &\log p_\theta(x_1) \geq \\
    &\int_{0}^1 \E_{x_t \sim p_t(\cdot | x_1)} \sum_{i=1}^D \left[ u_t^i(x_t^i, x_t) - u_t^i(x_t^i, x_t^i | x_1^i) + \sum_{x \neq x_t} u_t^i(x^i, x_t^i | x_1^i) \log \frac{u_t^i(x^i, x_t)}{u_t(x^i, x_t^i | x_1^i)} \right] \mathrm{d} t
\end{align}
}

\subsection{ELBO for masked models}
\label{a:elbo_masked}
The masked probability path is as in \eqref{e:mixture_token_dependent} with source distribution $p^i(x^i)=\delta_{\dummy}(x^i)$. Assuming the model is such that $p^{\theta}_{1|t}(z^i | x) = \delta_{x_1^i}(z^i)$ if $x^i$ is unmasked (i.e. $x^i =x_1^i$), our ELBO as in \eqref{e:elbo_mixture_path} further simplifies to 
\begin{align}
    \log p_1^{\theta}(x_1) \ge \int_{0}^1 \E_{x_t\sim p_t(\cdot|x_1)} \sum_{i=1}^D \delta_{\dummy}(x_t^i)\biggr[&-\sum_{y^i}\frac{\dot{\kappa}_t(y^i)}{1-\kappa_t(y^i)}p^{\theta}_{1|t} (y^i|x_t)\\
    &+ \frac{\dot{\kappa}_t(x_1^i)}{1-\kappa_t(x_1^i)}\parr{1+\log p^{\theta}_{1|t}(x_1^i|x_t)}\biggr]  \mathrm{d} t.
\end{align}
This simplified expression recovers the ELBO  for masked mixture path as proposed by \cite{shi2024simplified}.

\section{Experimental Details}
\subsection{Text generation}
\label{a:text_gen}

\paragraph{Data.}
Our model are on trained OpenWebText~\citep{Gokaslan2019OpenWeb} and FineWeb-Edu~\citep{lozhkov2024fineweb-edu}. For evaluation we use the test split of five dataset~\cite{radford2019language}: WikiText-103, WikiText-2~\cite{merity2016pointersentinelmixturemodels}, LAMBADA~\cite{paperno2016lambadadatasetwordprediction}, PennTreebank (PTB)~\cite{marcus-etal-1993-building}, One Billion Words (1BW)~\cite{chelba2014billionwordbenchmarkmeasuring}. Additionally, we extract $512$ samples of length $1024$ tokens of GPT2 Tokenizer from FineWeb-Edu, we do not see on training (our models do not complete an epoch in this dataset.
\paragraph{Models.}
All of our text generation models uses DiT transformers architecture~\cite{Peebles2022DiT} with 12 layers, 12 attention heads, and hidden dimension of 768 ($150m$ parameters). For optimization we use constant learning rate of $3e^{-4}$ with 2500 warmup steps, Adam optimizer with $\beta_1=0.9$ and $\beta_2=0.999$, and weight decay of $0.03$. We also use a dropout rate of $0.02$, and we train for $200k$ iterations with batch size of $512$.

\paragraph{ELBO for training.}
All text model are trained using our ELBO for mixture path as in \eqref{e:elbo_mixture_path}. To avoid exploding terms in the loss, we sample $t$ in $[0,1-1e^{-3}]$.

\paragraph{ELBO for evaluation.}
We want to evaluate the ELBO as in \eqref{e:elbo_mixture_path} for trained models with the mixture path as in \eqref{e:mixture_token_dependent}. We note that each choice of scheduler $\kappa_t(x_1^i)$ will results in a different conditional probability path and hence a different different ELBO. However for every token independent scheduler $\kappa_t(x_1^i)\equiv\kappa_t$ we can change the integration variable from $t$ to $\kappa$,

\begin{align}
    \textstyle \log p_1^{\theta}(x_1) \ge &\int_{0}^1\mathrm{d} t \E_{x_t\sim p_t(\cdot|x_1)} \sum_{i=1}^N \biggr[  \frac{\dot{\kappa}_t(x_t^i)}{1-\kappa_t(x_t^i)}p^{\theta}_{1|t}(x_t^i|x_t) -\sum_{y^i}\frac{\dot{\kappa}_t(y^i)}{1-\kappa_t(y^i)}p^{\theta}_{1|t} (y^i|x_t)+ \\
    & \qquad\qquad\qquad\qquad\quad\textstyle + (1-\delta_{x^i_1}(x_t^i))\frac{\dot{\kappa}_t(x_1^i)}{1-\kappa_t(x_1^i)}\parr{1+\log p^{\theta}_{1|t}(x_1^i|x_t)}\biggr] \\  
    & = \int_{0}^1\mathrm{d} t \E_{x_t\sim p_t(\cdot|x_1)} \frac{\dot{\kappa}_t}{1-\kappa_t}\sum_{i=1}^N \biggr[  p^{\theta}_{1|t}(x_t^i|x_t) -\sum_{y^i}p^{\theta}_{1|t} (y^i|x_t)+ \\
    & \qquad\qquad\qquad\qquad\qquad\qquad\quad\textstyle + (1-\delta_{x^i_1}(x_t^i))\parr{1+\log p^{\theta}_{1|t}(x_1^i|x_t)}\biggr] \\
    & = \int_{0}^1 \frac{\mathrm{d}\kappa}{1-\kappa}\E_{x_t\sim p_{t_{\kappa}}(\cdot|x_1)} \sum_{i=1}^N \biggr[  p^{\theta}_{1|t_{\kappa}}(x_t^i|x_t) -\delta_{x^i_1}(x_t^i) \\
    & \qquad\qquad\qquad\qquad\qquad\qquad\quad\textstyle + (1-\delta_{x^i_1}(x_t^i))\parr{\log p^{\theta}_{1|t_{\kappa}}(x_1^i|x_t)}\biggr], 
\end{align}
where $t_{\kappa}$ is the inverse of $\kappa_t$. For token dependent schedulers we only use the Kinetic Optimal scheduler as in \eqref{e:ko_scheduler},
\begin{equation}
    \kappa_t(x_1^i) = \frac{\sin^2 (1-t) \Omega(x_1^i)}{\sin^2 \Omega(x_1^i)}, \qquad \text{ where } \Omega(x_1^i) = \arccos \sqrt{p(x_1^i)}.
\end{equation}
Note that $\Omega\in\brac{0,\frac{\pi}{2}}$, depending on $\sqrt{p(x_1^i)}$, we take $\Omega=\frac{\pi}{4}$ and evaluate the integral,
\begin{align}
    \textstyle \log p_1^{\theta}(x_1) \ge &\int_{0}^1\mathrm{d} t \E_{x_t\sim p_t(\cdot|x_1)} \sum_{i=1}^N \biggr[  \frac{\dot{\kappa}_t(x_t^i)}{1-\kappa_t(x_t^i)}p^{\theta}_{1|t}(x_t^i|x_t) -\sum_{y^i}\frac{\dot{\kappa}_t(y^i)}{1-\kappa_t(y^i)}p^{\theta}_{1|t} (y^i|x_t)+ \\
    & \qquad\qquad\qquad\qquad\quad\textstyle + (1-\delta_{x^i_1}(x_t^i))\frac{\dot{\kappa}_t(x_1^i)}{1-\kappa_t(x_1^i)}\parr{1+\log p^{\theta}_{1|t}(x_1^i|x_t)}\biggr] \\
    &\int_{0}^1\mathrm{d} \kappa\parr{\Omega=\frac{\pi}{4}} \E_{x_t\sim p_t(\cdot|x_1)} \frac{1}{\dot{\kappa}_{t_{\kappa}}\parr{\Omega=\frac{\pi}{4}}}\sum_{i=1}^N \biggr[  \frac{\dot{\kappa}_t(x_t^i)}{1-\kappa_t(x_t^i)}p^{\theta}_{1|t_{\kappa}}(x_t^i|x_t)\\
    &\qquad\qquad\qquad\qquad\quad-\sum_{y^i}\frac{\dot{\kappa}_t(y^i)}{1-\kappa_t(y^i)}p^{\theta}_{1|t_{\kappa}} (y^i|x_t)+ \\
    & \qquad\qquad\qquad\qquad\quad\textstyle + (1-\delta_{x^i_1}(x_t^i))\frac{\dot{\kappa}_t(x_1^i)}{1-\kappa_t(x_1^i)}\parr{1+\log p^{\theta}_{1|t_{\kappa}}(x_1^i|x_t)}\biggr],
\end{align}
where $\kappa_t\parr{\Omega=\frac{\pi}{4}}$ is the Kinetic Optimal scheduler with $\Omega=\frac{\pi}{4}$, and $t_{\kappa}$ is the inverse of $\kappa_t\parr{\Omega=\frac{\pi}{4}}$. Now that we have a more fair estimator all the schedulers we us, for each $x_1$ we discretize 
$\kappa\in[0,1-1e^{-4}]$ to $1024$ using
\begin{equation}
    \kappa_j = (j + \eps) \frac{1-1e^{-4}}{1024},\qquad j=0,...,1023, \eps \sim U[0,1].
\end{equation}

\subsection{Inorganic material generation}
\paragraph{Material representation.}
A crystal is represented by a parallelepiped in 3D space with periodic boundary conditions, as in previous works \citep{miller2024flowmm, xie2021crystal}.
The model input is a variable-length sequences with length $6 + 4 \cdot a$, where $a$ is the number of atoms in the unit cell.
The first $3$ tokens represent the lengths of the sides of the parallelepiped, while the next $3$ represent the angles between the sides.
Every atom is comprised of $4$ tokens: a discrete atom type  and 3 continuous numbers representing the atom position inside the parallelopiped in cartesian coordinates.
The coordinates are represented relative to the side lengths of the parallelopiped, and are therefore restricted to the interval $[0, 1]$ (known as \emph{fractional coordinates}).

While lengths, angles, and fractional coordinates are all continuous quantities, we discretize them uniformly to generate tokens, following the same tokenization method from \cite{gruver2024fine} -- lengths (in \AA) are truncated to one decimal place, angles (in degrees) are represented as integers, and fractional coordinates are truncated to two decimal places. 
The token set for these attributes can be created by the following python code:
\begin{verbatim}
    tokens_lens = [f"{i/10:.1f}" for i in range(500)]
    tokens_angles = [str(x) for x in range(180)]
    tokens_frac = [f"0.{i:02d}" for i in range(100)] + ["1.00"]
\end{verbatim}
Tokens for atoms are taken from Pymatgen \citep{ong2013python} like so
\begin{verbatim}
    from pymatgen.core.periodic_table import Element
    tokens_atom = [Element.from_Z(z).name for z in range(1, 95)]
\end{verbatim}

The overall vocabulary is composed of all previously mentioned sub-vocabularies, plus $3$ special tokens: beggining-of-sentence (BOS), masking, and padding, totalling $500 + 180 + 101 + 94 + 3 = 878$.

\paragraph{Model implementation.}
All of our models listed in \Cref{tab:material_generation}, namely, DFM, Kinetic Optimal DFM (KO-DFM), and Autoregressive (AR), use a modified version of the Diffusion Transformer (DiT) \citep{peebles2023scalable} implementation from \cite{lou2023discrete}. The DFM model uses the cubic scheduler $\kappa_t = t^3$, while the KO-DFM model uses the kinetic optimal scheduler in \eqref{e:ko_scheduler}.

Two sequences that differ only in a permutation of their atoms, along with their fractional coordinates, represent the same crystal.
For DFM and KO-DFM, we modified DiT to account for this invariance by transforming the input before applying the attention mechanism.
We flatten each quadruple of embeddings representing an atom (i.e., atom type plus $3$ fractional coordinates) and apply a linear layer with a SiLU \citep{elfwing2018sigmoid} activation to create a single representation for the atom.
This brings the sequence length from $6 + 4 \cdot a$ to $6 + a$.
Positional embeddings are then added, where the same positional embedding is added to all $a$ output embeddings of the previous step, which establishes the invariance.
After the attention mechanism, $4$ independent linear layers are applied to each of the $a$ outputs, increasing the sequence length from $6 + a$ back to $6 + 4 \cdot a$, before computing the logits.

For the AR model, we replaced rotary embeddings \citep{su2024roformer} with sinusoidal positional encodings.
Note that permutation invariance cannot be enforced in the same way as DFM and KO-DFM, as the model generates tokens auto-regressively.
The AR model performs conditional generation by generating an embedding for the number of atoms $a \in \{0, ..., a_{\max}-1\}$, where $a_{\max} = 20$ for the MP-20 dataset in \Cref{tab:material_generation}.
The embedding is then passed to the same conditioning mechanism (adaLN) present in the original DiT architecture \citep{peebles2023scalable}.

\paragraph{Training and sampling.}
Hyperparameter values used during training are listed in \Cref{tab:material_dit_hyperparams}. DFM and KO-DFM use the same values.

\begin{table}[!h] \small
\center
\begin{tabular}{lllllll}
Param. & Hidden dim. & Attn. Blocks & Attn. Heads & Dropout & Batch Size & Learn. rate       \\
\toprule
AR     & $288$       & $16$         & $16$        & $0.1$   & $1024$     & $1\mathrm{e}{-3}$ \\
DFM, KO-DFM    & $256$       & $16$         & $16$        & $0.1$   & $1024$     & $1\mathrm{e}{-3}$ \\
\bottomrule
\end{tabular} \caption{Hyperparameters used to train the DiT models for material generation.}
\label{tab:material_dit_hyperparams}
\end{table}

The hidden dimension of KO-DFM and DFM was lowered to roughly match the same number of parameters as the AR model and FlowMM \citep{miller2024flowmm} (around $25$ million), due to the additional layers required to ensure permutation invariance.
Models are trained to predict the next token by minimizing  the cross-entropy loss (\eqref{eq:cross_entropy_p1|t}).

During sampling, the softmax temperature was fixed to $0.7$ for DFM and KO-DFM, and $1.0$ for the AR model.
Both DFM and KO-DFM have noise distribution equal to a delta function on the all masked sequence (as in \cite{gat2024discreteflowmatching}).
DFM uses the convex linear scheduler ($\kappa_t = t$), while KO-DFM uses the proposed kinetic-optimal scheduler (\ref{e:ko_scheduler}).

\paragraph{Evaluation metrics} Our primary metric for material generation is based on thermodynamic stability, a key indicator of the synthesizability of materials. Thermodynamic stability is measured by comparing the energy of a material to a database of previously known materials with the same elements. Formally, we define Energy above Hull ($E^{hull}$) as the distance in energy landscape between the generated material and a convex hull of energies constructed from these reference database of materials. Stable materials have $E^{hull} < 0$, that is the energy of the new material is below the convex hull. Following \cite{miller2024flowmm}, we define our \textit{Stability Rate} metric as the percentage of generated materials that are stable, i.e. $E^{hull} < 0$ and n-ary $\geq 2$, where n-ary of a material is the number of unique elements in it. 

To compute the energies, we follow the methodology from \cite{miller2024flowmm}: we first perform structure relaxations using the CHGNet model \citep{deng_2023_chgnet}, followed by density functional theory (DFT) \citep{kohn1965self} calculations. We generated 10,000 materials to compute the stability rate.

Due to the high computational cost of performing these energy calculations, \cite{xie2021crystal} proposed a number of proxy metrics, which we also include for completeness:
\begin{enumerate}
    \item \textit{Structural Validity}: Percentage of generated materials where all pairwise interatomic distances are greather than 0.5 \AA.
    \item \textit{Compositional Validity}: Percentage of generated materials that are determined to be charge-neutral using the SMACT heuristic system \cite{davies2019smact}.
    \item \textit{Coverage Precision \& Recall}: 
    Precision and Recall metrics computed by comparing 10000 generated structures to the MP-20 test set. Precision is the percentage of generated structures that are close to some test structure, while recall is the percentage of test structures which are close to some generated structure. Closeness is evaluated using structural and compositional fingerprints \citep{zimmermann2020local,ward2016general}.
    \item \textit{Wasserstein Distances of Property Distributions}: 
    Wasserstein distances between the distribution of computed properties between the test set and the generated materials. We compute these distances for two properties: density ($\rho$), and number of unique atoms ($N_{\text{el}}$)  
\end{enumerate}

We emphasize that most of these proxy metrics have become saturated and are not very good at distinguishing state-of-the-art models.

\subsection{Image generation - CIFAR10}
\paragraph{Models}
All our CIFAR10 models use the U-Net architecture as in \citet{dhariwal2021diffusion}, with channels 96 , depth 5, channels multiple [3,4,4], heads channels 64, and attention resolution 16. Additionally, we make two changes to the architecture as done in \cite{gat2024discreteflowmatching}: (i) We replace the first layer with an embedding table of size $256\times96$, and we stack the channel features such that the input to the U-Net is of shape $288\times32\times32$. (ii) We enlarge the size of the final layer to output a tensor of shape $3\times32\times32\times256$. Overall parameters count of 113M. For optimization we use dropout rate of 0.3, and Adam optimizer with $\beta_1=0.9$ and $\beta_2=0.999$, a learning rate of 1e-4. We trained with an effective batch size pf 512 for approximately 300K iterations.

\paragraph{The conditional path.}
For our metric induced probability path (\ref{eq:metric_prob_path}) on pixel space we have a natural choice of metric. We embed $\gT =\set{0,...,255}$ in the interval $[-1,1]\subset\R$ using the map $\mathrm{emb}(x) = \frac{2}{255}x-1$ and with the $l_p$ distance,
\begin{equation*}
    \dist(x,x_1) = \abs{\mathrm{emb}(x) - \mathrm{emb}(x_1)}^{\text{lp}},
\end{equation*}
where lp is a Hyper-parameter. For the $\beta_t$ scheduler we use,
\begin{equation*}
    \beta_t = c\parr{\frac{t}{1-t}}^a,
\end{equation*}
where $a$ and $c$ are Hyper-parameters. We find that best results are achieved with $\text{lp}=3$, $a=5$, and $c=1$. For the other baselines in Figure \ref{fig:cifar10_fid_vs_nfe} we follow \citep{gat2024discreteflowmatching}.

\subsection{Image generation - Face-blurred ImageNet256$\times$256}

Our ImageNet256 experiments are conducted on the face-blurred variant of the ImageNet benchmark dataset scaled to 256x256 pixels. We first train a tokenizer model (encoder, quantizer and decoder) that maps the images to a discrete latent representation and back. Then, we train a latent generative model to generate latent representations conditional on the image class.

\paragraph{Tokenizer details.} The tokenizer is realized as a VQVAE. Our architecture matches that of VQGAN \citep{esser2021taming}. It applies a 16x downscaling to the image with a vocabulary size of 16384. The VQVAE is trained with the VQGAN loss for 40 epochs with a batch size of 128. We optimize using Adam with learning rate $1e-4$, $\beta_1=0.9$, and $\beta_1=0.95$. We apply an exponential moving average to the VQVAE weights with decay rate of 0.999. After the training is complete, our VQVAE model reached an rFID value of \rev{2.20, which matches the rFID reported by \citet{sun2024autoregressive} on non-face-blurred ImageNet256}.

\paragraph{The baseline} The baseline with a masked source distribution uses the cubic scheduler $\kappa_t = t^3$.

\rev{\paragraph{The metric path.} Our metric-induced probability path uses the euclidean distance of the token VQVAE embeddings as the distance function with $\mathrm{lp}$ being a free parameter: 

\begin{equation}
    \dist(x, x_1) = |\mathrm{emb}(x) - \mathrm{emb}(x_1)|_2^\mathrm{lp} \,.
\end{equation}

Furthermore, we parameterize $\beta_t$ as 

\begin{equation}
    \beta_t = c \left( \frac{t}{1-t}\right)^a \,,
\end{equation}
with $c$ and $a$ being free parameters.

These three parameters are costly to search, because each configuration requires a separate model to train. We tune these parameters visually by plotting the samples along the  conditional path and looking for configurations that make use of the whole time interval [0,1]. We settled on $a=0.9$, $c=3$ and $\mathrm{lp}=4$ (see \Cref{fig:conditional_path})

\newcommand{\captionSpace}{\hspace{0.2cm}}
\begin{figure}[h]
    \centering
        $t=0.0$ \captionSpace $t=0.125$ \captionSpace $t=0.25$ \captionSpace $t=0.375$ \captionSpace $t=0.5$ \captionSpace $t=0.625$ \captionSpace $t=0.75$ \captionSpace $t=0.875$ \captionSpace $t=1.0$ \\
        \vspace{0.1em}
        \includegraphics[width=\linewidth]{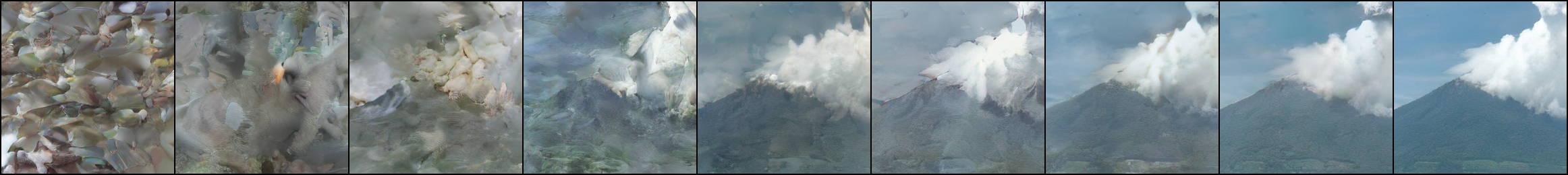}\\
        \vspace{0.1em}
        \includegraphics[width=\linewidth]{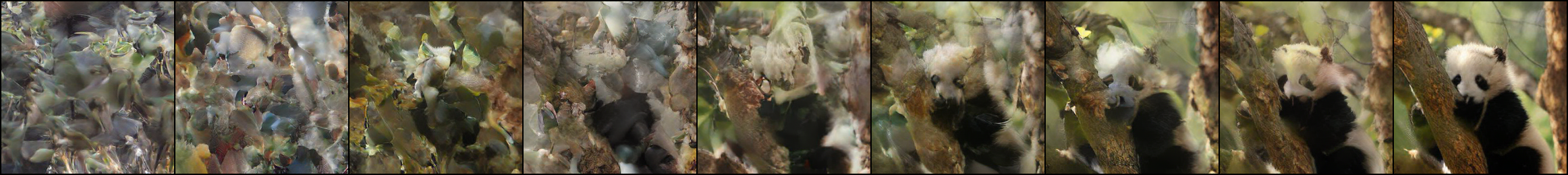}\\
        \vspace{0.1em}
        \includegraphics[width=\linewidth]{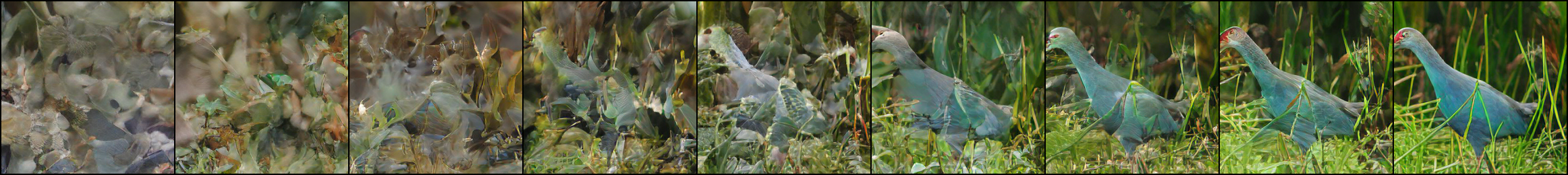}\\
        \vspace{0.1em}

    \caption{The conditional path for $a=0.9$, $c=3$ and $\mathrm{lp}=4$. This path is advantageous because the the path smoothly interpolates from noise to image while utilizing the whole interval $t\in [0,1]$.}
    \label{fig:conditional_path}
\end{figure}
\paragraph{Latent Generative model details.} Our generative model uses the Llama architecture that is also used by the LlamaGen model \citep{sun2024autoregressive}. Our comparisons are done on the Llama-B architecture variant with 111M parameters. For training hyperparameters, we used the exact configuration proposed in \citet{sun2024autoregressive}: batch size of 256, learning rate of 1e-4 with 2500 warmup steps, weight decay of 0.05, Adam optimizer with  $\beta_1=0.9$ and $\beta_2=0.95$, gradient norm of 1.0 and class drop probability of 0.1. We used the same ten-crop data augmentation for training that  \citep{sun2024autoregressive} used.

Following the guidance of \citep{sun2024autoregressive}, the autoregressive and masked models were trained for 300 epochs.
We found that the metric path model benefited from further training, so we trained this variant for 600 epochs.

The DFM models required minor architecture adjustments: 
\begin{itemize}
    \item The masked configuration uses non-causal attention.
    \item The metric path configuration uses non-causal attention and we also prepend a time embedding token (sinusoidal embedding) before the class label token to enable the model to learn the time dependency.
\end{itemize}
\paragraph{Evaluation.} We report the FID of 50,000 generated images w.r.t. the training set. Note that our LlamaGen reproduction obtains a lower FID value then reported in \citet{sun2024autoregressive} (4.81 vs 5.46). This difference is due to us using the face-blurred variant of ImageNet. While \citet{sun2024autoregressive} compares against the pre-computed statistics of non-face-blurred ImageNet, we compile the statistics of face-blurred ImageNet, including training data augmentations.

\paragraph{Ablations.} We show ablations for CFG scale (\Cref{tab:merged_cfg}) and NFE (\Cref{tab:cond_path_nfe}).

\begin{table}[h]
\centering
\resizebox{\textwidth}{!}{%
\begin{tabular}{lcccccccccccccccc}
\toprule
CFG scale & 1.0 & 1.1 & 1.2 & 1.3 & 1.4 & 1.5 & 1.6 & 1.7 & 1.8 & 1.9 & 2.0 & 2.1 & 2.2 & 2.3 & 2.4 & 2.5 \\
\midrule
LlamaGen, FID: & -- & -- & -- & -- & -- & 5.77 & 5.18 & 4.91 & 4.81 & 5.02 & 5.26 & 5.63 & 6.11 & 6.56 & 7.12 & 7.70 \\
DFM masked path, NFE=100, FID: & 17.78 & 12.85 & 9.53 & 7.45 & 6.28 & 5.78 & 5.76 & 6.03 & 6.56 & 7.20 & 7.97 & -- & -- & -- & -- & -- \\
DFM metric path, NFE=100, FID: & 8.58 & 5.99 & 4.87 & 4.50 & 4.82 & 5.47 & -- & -- & -- & -- & -- & -- & -- & -- & -- & -- \\
\bottomrule
\end{tabular}%
}
\caption{Ablation of CFG scale for LlamaGen and DFM models. The missing cells were not evaluated because they are far from the optima.}
\label{tab:merged_cfg}
\end{table}

\begin{table}[h] 
\center
\begin{tabular}{lcccccc}
\toprule
NFE & 50 & 100 & 150 & 200 & 250  \\
\midrule
DFM masked path, CFG=1.6, FID:   & 5.73 & 5.72 & 5.74 & 5.71 & 5.82 \\
DFM metric path, CFG=1.3, FID:   & 4.78 & 4.50 & 4.69 & 4.87 & 4.98 \\
\bottomrule
\end{tabular} \caption{Ablation of NFE for the DFM models.}
\label{tab:cond_path_nfe}
\end{table}

}

\rev{
\section{Relation to SEDD \citep{lou2023discrete}}
In this section we explain the relation between our method and SEDD~\citep{lou2023discrete}. We focus on three main points:
\begin{enumerate}
\item \emph{Generality of probability paths.} SEDD starting point is a diffusion matrix $Q^i_t(x^i,z^i)$ and requires a closed-form conditional probability $p_t(x^i|x_1^i)$ path solving the Kolmogorov equation (linear ODE) with this rate matrix. This entails solving a (general) $\abs{\gT}$ dimensional ODE which can be hard to do in closed form. Therefore SEDD resorts to rates of the form $Q_t^i(x^i,z^i) = \sigma_t Q^i(x^i,z^i)$. In contrast, our method offers a closed form generating rates (velocity) for \emph{every} conditional probability path, see \eqref{e:velocity_from_flux} and \ref{eq:Neta_flux}.


    \item \emph{Score-velocity conversion.} The concrete score function is a particular way to parameterize a probability velocity which is given by
    \begin{equation}
        u_t^i(x^i,z)=Q^{i}_{t}(x^i,z^i)s_{t}^i(x^i,z).
    \end{equation}

    \item \emph{Loss.} The training loss of SEDD can be seen as instance of our ELBO (\ref{eq:continuous_time_elbo}) when using the concrete score parameterization.
    
\end{enumerate}

\paragraph{Probability velocity vs. concrete score.}
Using our notation, the noising process of SEDD taking a distribution $p_1$ at time $t=1$, to a some simple distribution $p_0$ at time $t=0$ is defined by the transition probability
\begin{equation}\label{ea:noising}
    \P(X_{t-h}=x\ \vert \  X_t = z) = \delta_{z}(x) + h Q_t(x, z) + o(h),
\end{equation}
where $Q_t\in\R^{\abs{\gS}\times\abs{\gS}}$ is called \emph{diffusion matrix} and it satisfies the rate conditions as in \eqref{e:rate_conditions_general}. The reverse process, taking the distribution $p_0$ at time $t=0$ to the distribution $p_1$ at $t=1$ is given by the diffusion matrix,
\begin{equation}
    \bar{Q}_t(x,z) = Q_t(z, x)\frac{p_t(x)}{p_t(z)}
\end{equation}
where the marginal $p_t$ is determined by the noising process (\ref{ea:noising}) and $p_1$ the distribution at the boundary $t=1$. The transition probability of the reverse process is 
\begin{equation}\label{ea:denoising}
    \P(X_{t+h}=x\ \vert \  X_t = z) = \delta_{z}(x) + h \bar{Q}_t(x, z) + o(h),
\end{equation}
To make the process tractable, the noising diffusion matrix that is chosen only allows transitions from states $z\in\gS$ to $x\in\gS$ that differ by single token as in \eqref{e:example_one_token_change},
\begin{equation}
    Q_t(x,z) = \sum_{i=1}^D Q_t^{i}(x^i,z^i)\prod_{j 
\ne i}\delta_{z^j}(x^j),
\end{equation}
where $Q^{i}\in\R^{\abs{\gT}\times\abs{\gT}}$ and satisfy the rate conditions (\ref{e:rate_conditions_general}). In this case the diffusion matrix of the reverse process is,
\begin{align}
    \bar{Q}_t(x,z) &= Q_t(z, x)\frac{p_t(x)}{p_t(z)}\\
    &= \sum_{i=1}^D Q^{i}_t(z^i,x^i)\prod_{j \ne i}\delta_{x^j}(z^j)\frac{p_t(x)}{p_t(z)}\\
    & = \sum_{i=1}^D Q^{i}_t(z^i,x^i)s_t^i(x^i,z)\prod_{j \ne i}\delta_{z^j}(x^j), \label{ea:Q_factorized}
\end{align}
where $s_t(x^i,z)$ is called the \emph{concrete score function} and it is defined as
\begin{equation}
    s_t^i(x^i,z) = \frac{p_t(z^1,...,z^{i-1},x^i,z^{i+1},...,z^D)}{p_t(z^1,...,z^{i-1},z^i,z^{i+1},...,z^D)}.
\end{equation}
Considering the boundary condition at time $t=1$ to be the data distribution, $p_1\equiv q$, since in our notation the velocity of reverse process is $u_t(x,z)=\bar{Q}_t(x,z)$ we have that (comparing \eqref{ea:Q_factorized} and \eqref{e:factorized_velocity})
\begin{equation}\label{ea:score_to_velocity}
     u_t^i(x^i,z)=Q^{i}_{t}(z^i,x^i)s_{t}^i(x^i,z).
\end{equation}
In the next paragraph we show that for the boundary condition $p_1\equiv \delta_{x_1}$, the time marginal of the noising process is factorized,
\begin{equation}
    p_t(x|x_1)=\prod_{i=1}^Dp_t(x^i|x_1^i).
\end{equation}
In this case the conversion from concrete score to the probability velocity is,
\begin{equation}
    u_t(x^i,z^i|x_1^i) = Q_t(z^i,x^i)\frac{p_t(x^i|x_1^i)}{p_t(z^i|x_1^i)}.
\end{equation}
Considering \eqref{e:u_t_marginalization}, we see that the relation between the concrete score and the probability velocity in \eqref{ea:score_to_velocity} holds only if $Q_t^i(x^i,z^i)$ is independent of $x_1$.

\paragraph{The conditional probability path.} The conditional probability path is the marginal of the noising process when taking $p_1\equiv\delta_{x_1}$. Hence, the relation between the diffusion matrix $Q_t$ and the conditional probability path is given by an ODE,
\begin{align}
    \frac{d}{dt}p_{1-t}(x|x_1) &= \sum_{z\in\gS}Q_{1-t}(x,z)p_{1-t}(z|x_1)\\
    &= \sum_{z\in\gS}\sum_{i=1}^D Q_{1-t}^{i}(x^i,z^i)\prod_{j 
\ne i}\delta_{z^j}(x^j)p_{1-t}(x_1).
\end{align}
One can check that indeed the factorized conditional probability path, \ie,  $p_t(x|x_1)=\prod_{i=1}^Dp_t(x^i|x_1^i)$, is the (unique) solution to the above ODE in case that 
\begin{equation}\label{e:ode_T}
    \frac{d}{dt}p_{1-t}(x^i|x_1^i) = \sum_{z^i\in\gT}Q_{1-t}^{i}(x^i,z^i)p_{1-t}(z^i|x_1^i).
\end{equation}
The ODE in \eqref{e:ode_T} is still too hard to solve in the general case, and some extra assumptions are in order if we hope to solve this equation in analytically. SEDD suggests the standard extra assumption that
\begin{equation}\label{ea:Q_restriction}
    Q_t^{i}(x^i,z^i) = \sigma_tQ^{i}(x^i,z^i),
\end{equation}
where $\sigma:[0,1]\too\R$, and $Q^{i}$ is constant in time. In this case the solution to \eqref{e:ode_T} is 
\begin{equation}
p_{1-t}(x^i|x_1^i)=\exp\brac{\parr{\int_0^t\sigma_sds}Q^{i}}(x^i,x_1^i).
\end{equation}
The assumption in (\ref{ea:Q_restriction}) significantly restricts the space of conditional probability paths.

In contrast, our point of view is arguably simpler: We start with an \emph{arbitrary} conditional $p_{t}(x^i|x_1^i)$ and develop a closed-form expression for its generating velocity using equations (\ref{e:velocity_from_flux}) and (\ref{eq:Neta_flux}).  

For example, the generating process using our metric path as in \eqref{eq:metric_prob_path} should be comparable to the reverse process given by some diffusion matrix,
\begin{align}
    Q^i_{t}(z^i,x^i)\frac{p_{t}(x^i|x_1^i)}{p_{t}(z^i|x_1^i)} = \bar{Q}^i_{t}(x^i,z^i|x_1^i)
    &= u_t^i(x^i,z^i|x_1^i) = p_{t}(x^i | x_1^i) \dot{\beta}_t [\dist(z^i, x_1^i) - \dist(x^i, x_1^i)]_+,
\end{align}
assuming the diffusion matrix $Q_{t}(x^i,z^i)$ is restricted to \Cref{ea:Q_restriction} we have that
\begin{equation}
    Q^i(z^i,x^i) = \frac{p_{t}(z^i | x_1^i)}{\sigma_t} \dot{\beta}_t [\dist(z^i, x_1^i) - \dist(x^i, x_1^i)]_+
\end{equation}
on leading to a contradiction since the L.H.S is constant in time.

\paragraph{SEDD training loss.} We derive the ELBO train loss for concrete score function as suggested in \cite{lou2023discrete} from our ELBO (\ref{eq:continuous_time_elbo}). To instantiate our ELBO we need to consider two reverse processes. The first correspond to the noising process (\ref{ea:noising}) with the boundary condition $p_1\equiv \delta_{x_1}$,
\begin{equation}
    u_t(x^i,z^i|x_1^i) = \sigma_{t}Q^{i}(z^i,x^i)\frac{p_{t}(x^i|x_1^i)}{p_{t}(z^i|x_1^i)}.
\end{equation}
The second correspond to the noising process (\ref{ea:noising}) with the boundary condition  $p_1\equiv q$ (i.e., data distribution),
\begin{equation}
    u_t^i(x^i, z) = \sigma_{t}Q^{i}(z^i,x^i)s_{t}^i(x^i,z).
\end{equation}
Now we substitute the velocities in the ELBO (\ref{eq:continuous_time_elbo}),
\begin{align}
    \log p_1(x_1) &\ge 
    \int_{0}^1 \E_{x_t\sim p_t(\cdot|x_1)} \sum_{i=1}^D \sum_{y^i\ne x_t^i} \biggr[u^i_t(y^i, x_t^i|x^i_1) - u_t^{i}(y^i, x_t)\\
    &\qquad\qquad\qquad\qquad\qquad\qquad +u^i_t(y^i, x_t^i|x^i_1)\log\parr{\frac{u_t^i(y^i, x_t)}{u^i_t(y^i, x_t^i|x^i_1)}}\biggr] \mathrm{d} t\\
    &=\int_{0}^1 \E_{x_t\sim p_t(\cdot|x_1)} \sum_{i=1}^D \sum_{y^i\ne x_t^i} \sigma_tQ^{i}(x_t^i,y^i)\biggr[\frac{p_t(y^i|x_1^i)}{p_t(x_t^i|x_1^i)} - s_t^i(y^i|x_t)\\
    &\qquad\qquad\qquad\qquad\qquad\qquad +\frac{p_t(y^i|x_1^i)}{p_t(x_t^i|x_1^i)}\log\parr{\frac{p_t(x_t^i|x_1^i)}{p_t(y^i|x_1^i)}s_t^i(y^i|x_t)}\biggr] \mathrm{d} t\\
    &=\int_{0}^1 \E_{x_t\sim p_t(\cdot|x_1)} \sum_{i=1}^D \sum_{y^i\ne x_t^i} \sigma_tQ^{i}(x_t^i,y^i)\biggr[- s_t^i(y^i|x_t)\\
    &\qquad\qquad\qquad\qquad\qquad\qquad +\frac{p_t(y^i|x_1^i)}{p_t(x_t^i|x_1^i)}\log\parr{s_t^i(y^i|x_t)}-g\parr{\frac{p_t(y^i|x_1^i)}{p_t(x_t^i|x_1^i)}}\biggr] \mathrm{d} t,
\end{align}
where $g(s)=s(\log(s)-1)$.
}
\clearpage
\section{Additional Tables and Figures}\label{app:additional_results}

\begin{figure}[h!]
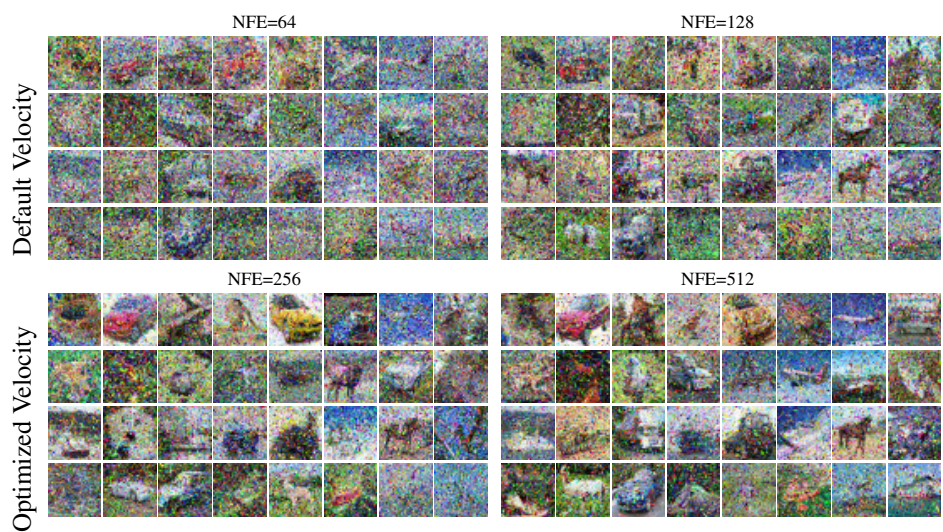

    \centering

    \end{tabular}
    \caption{CIFAR10 samples generated from our model using the velocity from \citet{campbell2024generative}, which does not work for general probability paths such as our metric-induced paths. This is the same $p_{1|t}$ model as was used to generate samples for \Cref{fig:samples_cifar10_64_128} and \Cref{fig:samples_cifar10_256_512}.}
    \label{fig:samples_cifar10_campbell}
\end{figure}

\begin{figure}[h]
    \centering
    \rotatebox[origin=l]{90}{\qquad  LlamaGen \citep{sun2024autoregressive}}
    \begin{subfigure}[b]{0.96\linewidth}
        \includegraphics[width=\linewidth]{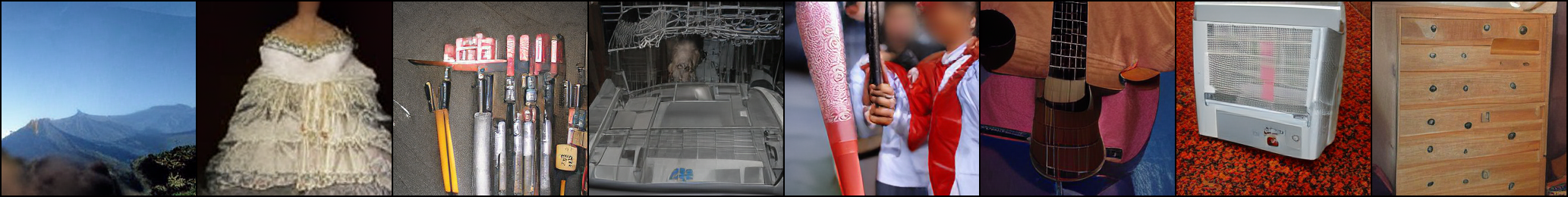}\\
        \includegraphics[width=\linewidth]{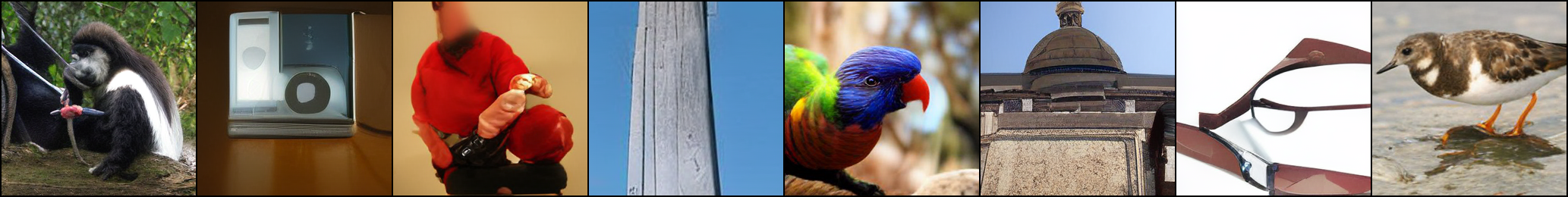}\\
        \includegraphics[width=\linewidth]{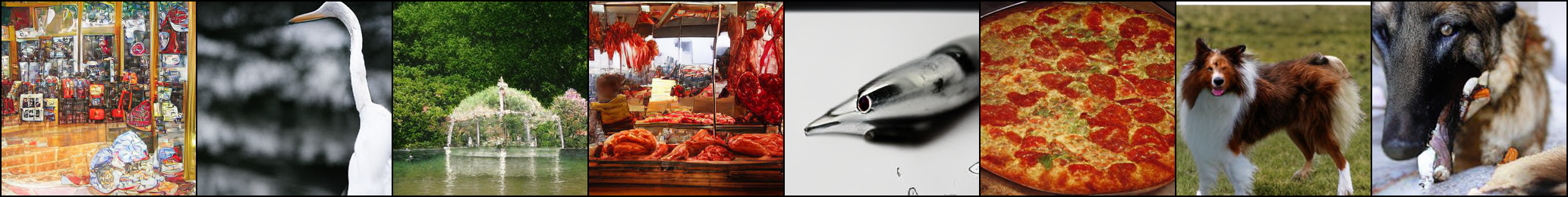}
    \end{subfigure}\\
    \vspace{3em}
    \rotatebox[origin=l]{90}{\;\; Discrete Flow Matching - Mask}
    \begin{subfigure}[b]{0.96\linewidth}
        \includegraphics[width=\linewidth]{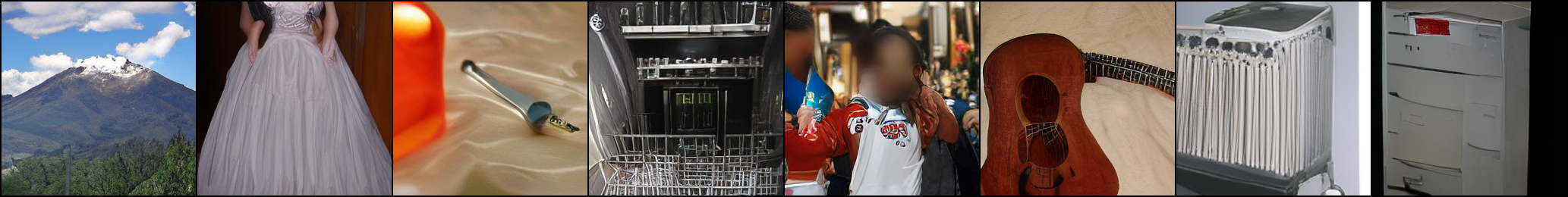}\\
        \includegraphics[width=\linewidth]{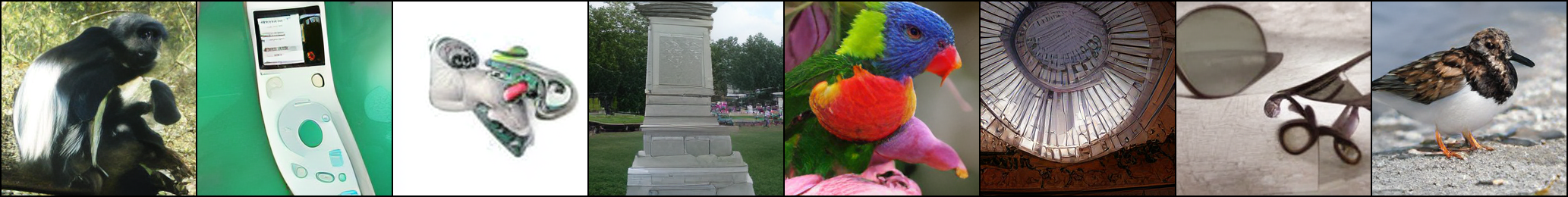}\\
        \includegraphics[width=\linewidth]{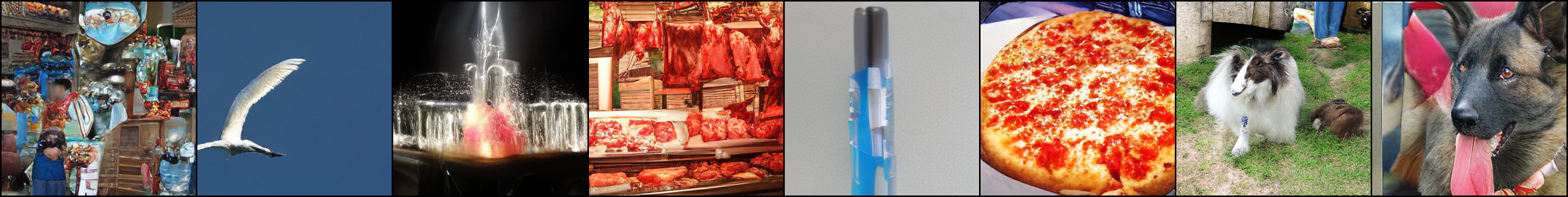}
    \end{subfigure}\\
    \vspace{3em}
    \rotatebox[origin=l]{90}{\; Discrete Flow Matching - Metric}
    \begin{subfigure}[b]{0.96\linewidth}
        \includegraphics[width=\linewidth]{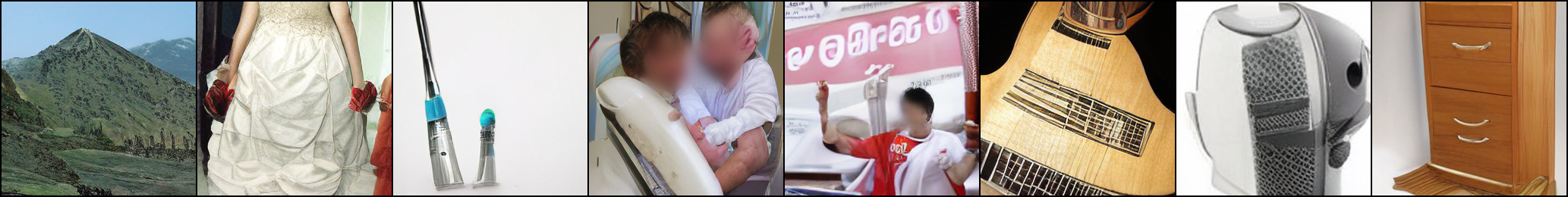}\\
        \includegraphics[width=\linewidth]{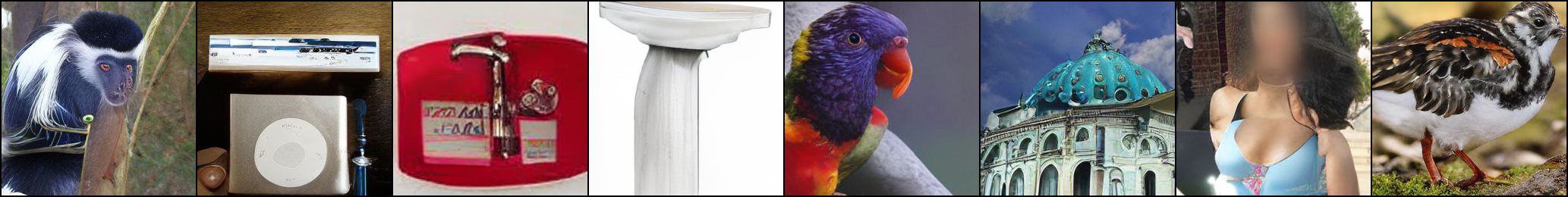}\\
        \includegraphics[width=\linewidth]{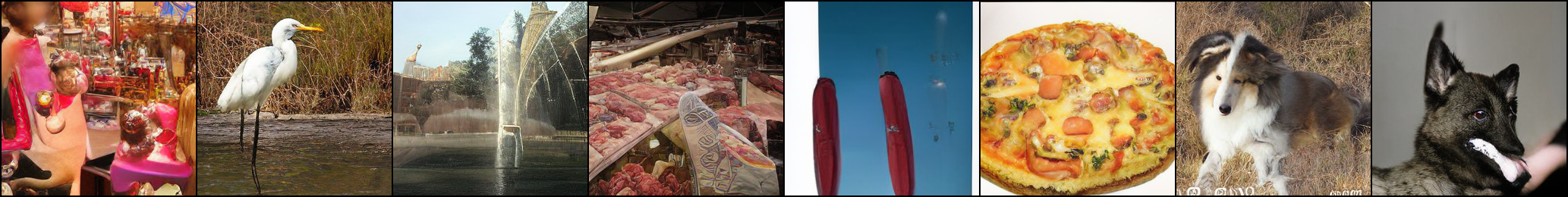}
    \end{subfigure}

    \caption{Non-curated generated samples for ImageNet256$\times$256.}
    \label{fig:imagenet_samples}
\end{figure}

\end{document}